\newcolumntype{C}[1]{>{\centering\arraybackslash}p{#1}} 
\newtheorem{theorem}{Theorem}
\newtheorem{lemma}[theorem]{Lemma}
\newtheorem{definition}{Definition}
\newtheorem{problem}{Problem}
\DeclareMathOperator{\interior}{int}
\newcommand{\R}{\mathbb{R}}
\newcommand{\by}{\mathbf{y}}
\newcommand{\bx}{\mathbf{x}}
\newcommand{\bz}{\mathbf{z}}
\newcommand{\bW}{\mathbf{W}}
\newcommand{\bb}{\mathbf{b}}
\newcommand{\bTheta}{\mathbf{\Theta}}
\newcommand{\blambda}{\mathbf{\lambda}}
\newcommand{\bLambda}{\mathbf{\Lambda}}
\newcommand{\ba}{\mathbf{a}}
\newcommand{\bA}{\mathbf{A}}
\newcommand{\bC}{\mathbf{C}}
\newcommand{\bd}{\mathbf{d}}
\newcommand{\bdelta}{\mathbf{\delta}}
\newcommand\scalemath[2]{\scalebox{#1}{\mbox{\ensuremath{\displaystyle #2}}}}
\newenvironment{rcases}
  {\left.\begin{aligned}}
  {\end{aligned}\right\rbrace}
\title{\LARGE \bf
Reachable Polyhedral Marching (RPM): An Exact Analysis Tool for Deep-Learned Control Systems}
\author{Joseph A. Vincent$^{1}$ and Mac Schwager$^{1}$
\thanks{$^{1}$Department of Aeronautics and Astronautics, Stanford University, Stanford, CA 94305, USA, {\texttt\small \{josephav, schwager\}@stanford.edu}}
\thanks{The first author was supported in part by a Dwight D. Eisenhower Transportation Fellowship. The NASA University Leadership Initiative (grant \#80NSSC20M0163) provided funds to assist the authors with their research, but this article solely reflects the opinions and conclusions of its authors and not any NASA entity. We are grateful for this support.}%
\thanks{Code: \href{https://github.com/StanfordMSL/Neural-Network-Reach}{\text{https://github.com/StanfordMSL/Neural-Network-Reach}}.}
}
\begin{document}

\maketitle
\thispagestyle{plain}
\pagestyle{plain}

\begin{abstract} \label{Abstract}
Neural networks are increasingly used in robotics as policies, state transition models, state estimation models, or all of the above.
With these components being learned from data, it is important to be able to analyze what behaviors were learned and how this affects closed-loop performance.
In this paper we take steps toward this goal by developing methods for computing control invariant sets and regions of attraction (ROAs) of dynamical systems represented as neural networks.
We focus our attention on feedforward neural networks with the rectified linear unit (ReLU) activation, which are known to implement continuous piecewise-affine (PWA) functions.
We describe the Reachable Polyhedral Marching (RPM) algorithm for enumerating the affine pieces of a neural network through an incremental connected walk.
We then use this algorithm to compute exact forward and backward reachable sets, from which we provide methods for computing control invariant sets and ROAs.
Our approach is unique in that we find these sets incrementally, without Lyapunov-based tools.
In our examples we demonstrate the ability of our approach to find non-convex control invariant sets and ROAs on tasks with learned van der Pol oscillator and pendulum models.
Further, we provide an accelerated algorithm for computing ROAs that leverages the incremental and connected enumeration of affine regions that RPM provides.
We show this acceleration to lead to a 15x speedup in our examples.
Finally, we apply our methods to find a set of states that are stabilized by an image-based controller for an aircraft runway control problem.
\end{abstract}

\section{Introduction}
\label{Sec:Introduction}
\subsection{Overview}

In this paper we describe the Reachable Polyhedral Marching (RPM) algorithm for enumerating the affine regions of neural networks with rectified linear unit (ReLU) activation.
This algorithm can then be leveraged to compute forward and backward reachable sets of neural networks.
Our algorithm provides a building block for proving safety properties for autonomous systems with learned perception, dynamics, or control components in the loop.  
Specifically, given a set in the input space, RPM computes the set of all corresponding outputs. 
Similarly, given a set of outputs, RPM computes the set of all corresponding inputs under the ReLU network.  
We use these capabilities to compute control invariant sets and regions of attraction (ROAs) for dynamical systems represented as neural networks.
Computing these sets helps roboticists (i) verify whether safety specifications on the robot state are met, (ii) identify states that will converge to some desirable equilibria, and (iii) identify regions in the state space for which a system can be controlled.
Identification of these sets is an important part of the control design process, as directly synthesizing control policies that meet invariance constraints by construction is challenging \cite{blanchini2008set}.
If a constraint is not met, the reachable or invariant sets computed give insight into which states lead to violation of the constraints, informing targeted policy improvements.



\begin{figure}[!t]
  \centering
  \includegraphics[width=0.85\columnwidth]{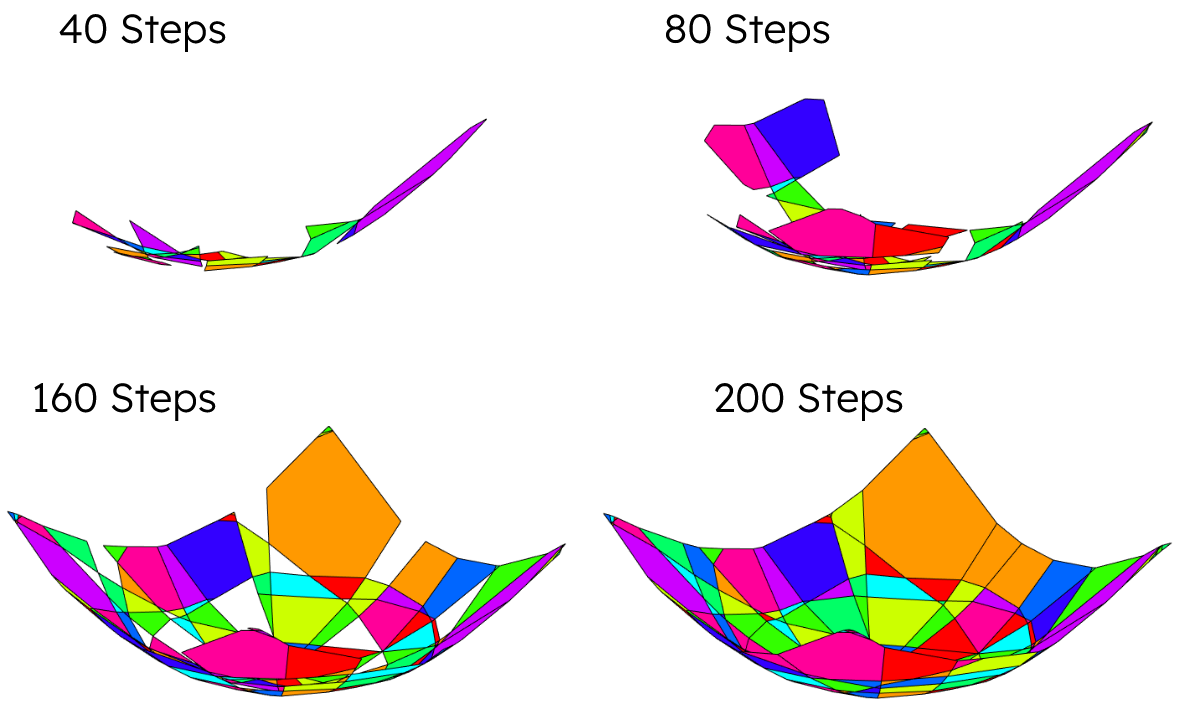}
  \caption{RPM's incremental enumeration of each affine region in a ReLU network. In this case the ReLU network was trained to approximate a quadratic function. Each new affine region enumerated is connected to a previous region.}
  \label{fig:cell_enum}
\end{figure}

It is well known that ReLU networks implement continuous piecewise-affine (PWA) functions, that is, the input space for a ReLU network can be tessellated into polyhedra, and over each polyhedron the neural network is affine. 
The RPM algorithm explicitly finds this equivalent PWA representation for a given ReLU network. 
Figure \ref{fig:cell_enum} illustrates how RPM incrementally solves for this representation. 
The algorithm incrementally enumerates the polyhedra and affine pieces of the PWA function by solving a series of Linear Programs (LPs), and following analytical edge flipping rules to determine neighboring polyhedra. 
The algorithm starts with an initial polyhedron, then solves for its neighboring polyhedra, then neighbors of neighbors, etc., until the desired input set is tessellated. 
In this way, our method is geometrically similar to fast marching methods in optimal control \cite{HJBFastMarching}, path planning \cite{PlanningFastMarching,FMT}, and graphics \cite{FastMarchingCubes,lei2020analytic}. 

We then use methods for PWA reachability to compute forward and backward reachable sets for each polyhedron and associated affine function. 
Computing forward and backward reachable sets allows us to search for and construct control invariant sets and ROAs without Lyapunov tools.
Finally, we also propose an accelerated backward reachability procedure that leverages the connected enumeration of the RPM algorithm to restrict the number of polyhedra the algorithm searches over.
We show that this accelerated procedure is guaranteed to enumerate the entire backward reachable set in the case that the deep network is a homeomorphism (a continuous bijection with continuous inverse).  
Checking this condition is simple, and represents a novel general procedure for examining the invertibility of neural networks whose hidden layers are not invertible themselves.


\subsection{Existing Work}
Most existing algorithms that compute exact reachable sets of neural networks iterate through the network layer-by-layer \cite{xiang2017reachable,nnv,yang2020reachability}. 
The layer-by-layer approaches obtain the entire reachable set at once at the end of the computation, rather than revealing the reachable set piece by piece throughout the computation. 
Consequently, if the computation must end early due to memory constraints or a computer fault, no usable result is obtained. 
In contrast, RPM builds the reachable set one polyhedron at a time, leading to a partial, but still potentially useful result if computation is halted before the algorithm runs to completion. 

\textcolor{black}{The authors of \cite{nnenum} also propose an incremental approach to visiting each affine region in a ReLU network.}
The difference between this tool and our RPM algorithm is that RPM enumerates affine regions of the neural network in a connected walk; each new region that is revealed is connected to a previous region. 
In contrast, the algorithm used in \cite{nnenum} does not have this property; whether a new region is connected to a previous one is unknown by the algorithm.
In Section \ref{Sec:ROA} we show how the connected enumeration property leads to an accelerated algorithm for computing ROAs.
Another way to conceptualize this difference between RPM and \cite{nnenum} is that RPM can also return the graph that describes which affine regions neighbor one another. \textcolor{black}{This feature is enabled by RPM's unique property of removing redundant constraints in the input-space polyhedra, resulting in the minimal description of each affine region.}

 \subsection{Contributions and Organization}
\textcolor{black}{In \cite{VincentSchwagerICRA21RPM}, the RPM algorithm was introduced and demonstrated on multi-step reachability tasks. 
In contrast, the contributions of this paper are
\begin{itemize}
     \item \textcolor{black}{A formal proof of correctness for the RPM algorithm via} a theorem to analytically determine an \textcolor{black}{affine region} from its neighbor by flipping neuron activations in ReLU networks. 
     \item Application of PWA analysis tools to compute control invariant sets and regions of attraction for dynamical systems represented as ReLU networks.
     \item An accelerated exact backward reachability algorithm for ReLU networks \textcolor{black}{that does not require enumerating all affine regions}, leveraging a novel \textcolor{black}{approach} for checking the invertibility of the network.
\end{itemize}}

\textcolor{black}{Through these contributions,} we are able to compute intricate invariant sets that may be non-convex and even disconnected.
In numerical examples, we compute ROAs for a learned van der Pol oscillator, and show this computation enjoys a $15$x speedup because the learned dynamics are homeomorphic. 
We then pair RPM with the MPT3 \cite{MPT3} Matlab toolbox to find an a control invariant set for a learned torque-controlled pendulum. 
Finally, we apply our algorithm to find a set of states stabilized by an image-based airplane runway taxiing system, TaxiNet \cite{verify_gan}. 
The closed-loop system is PWA with ${\sim}250,000$ regions, orders of magnitude larger than PWA systems for which other nonlinear stability analysis approaches have been demonstrated.

The paper is organized as follows.
We give related work in Section \ref{Sec:RelatedWork} and give background and state the problem in Section \ref{Sec:Background}. 
Section \ref{Sec:PWA} describes the RPM algorithm and explains its derivation. 
In Section \ref{Sec:Reachability} we describe how RPM is used to perform forward and backward reachability computations for ReLU networks over multiple time steps. 
In Section \ref{Sec:ROA} we describe how RPM is leveraged to compute control invariant sets and ROAs. 
Finally, Section \ref{Sec:Examples} presents numerical results for computing control invariant sets and ROAs for learned dynamical systems and we offer conclusions in Section \ref{Sec:Conclusions}. 



\section{Related Work} 
\label{Sec:RelatedWork}
Though the analysis of neural networks is a young field, a broad literature has emerged to address varied questions related to interpretability, trustworthiness, and safety verification. Much work has been dedicated to characterizing the expressive potential of ReLU networks by studying how the number of affine regions scales with network depth and width \cite{bengio,Hanin_2019,understanding}. Other research includes encoding piecewise-affine (PWA) functions as ReLU networks \cite{fem,reverseengineering}, learning deep signed distance functions and extracting level sets \cite{lei2020analytic}, and learning neural network dynamics or controllers that satisfy stability conditions \cite{zico,jin2020neural}, which may be more broadly grouped with correct-by-construction training approaches \cite{art,lincolnlab}.

Spurred by pioneering methods such as Reluplex \cite{reluplex}, the field of neural network verification has emerged to address the problem of analyzing properties of neural networks over continuous input sets. A survey of the neural network verification literature is given by \cite{survey}. Reachability approaches are a subset of this literature and are especially useful for analysis of learned dynamical systems. 

Reachability methods can be categorized into overapproximate and exact methods. 
The reachable sets computed by overapproximate methods are guaranteed to capture all reachable states, but may also contain states that are unreachable.
Overapproximate methods often compute neuron-wise bounds either from interval arithmetic or symbolically \cite{maxsens,fastlip,reluval,neurify,crown, xiang2020reachable}. Optimization based approaches are also used to solve for bounds on a reachable output set in \cite{sherlock,alessio1,alessio2,proveable}. Other approaches include modeling the network as a hybrid system \cite{verisig}, abstracting the domain \cite{abstract}, and performing layer-by-layer operations on zonotopes \cite{AI2}.
Closed-loop forward reachability using overapproximate methods is investigated in \cite{julian2019reachability, everett, overt, entesari2023automated, meng2022learning, kochdumper2023open}
Closed-loop backward reachability using overapproximate methods is investigated in \cite{rober2023backward, zhang2023backward}.

Exact reachability methods have also been proposed, although to a lesser degree. 
These methods either iteratively refine the reachable set by applying layer-by-layer transformations \cite{xiang2017reachable,nnv,yang2020reachability}, or they solve for the explicit PWA representation and compute reachable sets from this \cite{nnenum, VincentSchwagerICRA21RPM}.
Similar layer-by-layer approaches have also been proposed to solve for the explicit PWA representation of a ReLU network \cite{robinson2020dissecting, hanin2019complexity, xu2022traversing}.

Our RPM algorithm is an exact method, meaning that there is no conservatism in the reachable sets computed by our algorithm.
Our algorithm inherits all the advantages of exact methods, but is unique in that it enumerates the reachable set in an incremental and connected walk.
This makes our high-level procedure similar to explicit model predictive control methods \cite{eMPC_book}. 
Finally, all intermediate polyhedra and affine map matrices of layer-by-layer methods must be stored in memory until the algorithm terminates, whereas with incremental methods, once a polyhedron-affine map pair is computed it can be sent to external memory and only a binary vector (the neuron activations) needs to be stored to continue the algorithm. 
This is especially useful because no method has been shown to accurately estimate the number of regions without explicit enumeration.
Unlike other incremental approaches, ours explores affine regions in a connected walk. As a point of distinction, this allows our algorithm to not only output the PWA representation, but for each affine region also output the neighboring affine regions.

In this paper we demonstrate how RPM can be used not only for finite-time reachability, but also for computing control invariant sets and ROAs.
Computing these sets tends to be much harder than computing finite-time reachable sets, but the associated guarantees hold for all time. 
Few methods exist for computing invariant sets or ROAs for dynamical systems represented as neural networks, and most that do rely on an optimization-based search of a Lyapunov function. 
In \cite{ellip_roa}, the authors learn a quadratic Lyapunov function using semidefinite programming that results in an ellipsoidal ROA, a drawback of which is that ellipsoids are not very expressive sets. 
In contrast, nonconvex ROAs can be computed by \cite{richards2018lyapunov, chen2021learning, dai2021lyapunov}. 
These methods learn Lyapunov functions for a given dynamical system and verify the Lyapunov conditions either using samples and Lipschitz continuity or mixed integer programming. 
In \cite{control_inv_changliu}, an optimization and sampling-based approach for synthesizing control invariant sets is given, but as with searching for Lyapunov functions, it may fail in finding a control invariant set when one exists. 
All of these methods inherit the drawbacks of Lyapunov-style approaches wherein a valid Lyapunov function may not be found when one exists, and certifying the Lyapunov conditions is challenging.
In contrast to these methods, we explicitly solve for the PWA form of the dynamics, then find invariant sets using reachability methods. 
A reachability-based approach like ours can be more constructive and reliable than Lyapunov approaches, as we show in our experiments.
Lastly, a non-Lyapunov method is given in \cite{jouret2023safety}, where the focus is on continuous-time dynamical systems, whereas our focus is on discrete-time.


For PWA dynamics, there are specialized algorithms for computing ROAs based on convex optimization approaches for computing Lyapunov functions \cite{biswas, pwa_lyap, lmi_lyap, baldi2018reachable}, or reachability approaches \cite{MPT3,pwa_lygeros}. 
A drawback to the Lyapunov approaches for PWA dynamics is they can be too conservative and the standard approaches require the user to provide an invariant domain, which itself can be very challenging to find. 
Furthermore, in Section 7.2 of \cite{biswas}, motivating examples are given where a variety of Lyapunov approaches fail to find valid Lyapunov functions for very small PWA systems that are known to be stable. 

In contrast to Lyapunov approaches, the reachability approach requires an initial `seed' set of states that is a ROA and uses backward reachability to grow the size of the ROA.
Finding a seed ROA can be difficult for general nonlinear systems, but for PWA systems can be easily found due to the dynamics being locally linear.
Although reachability-based computation of invariant sets has been explored for PWA dynamical systems before \cite{pwa_lygeros}, a novelty in our work is pairing these methods with our RPM algorithm so that they can apply to dynamical systems represented as neural networks.
Our proposed method for finding ROAs of neural networks follows the backward reachability approach from the literature. 
In addition, we show that backward reachability may be sped up considerably when restricting RPM to enumerate a connected backward reachable set (a unique feature enabled by the connected walk of the algorithm).

\section{Background and Problem Statement}
\label{Sec:Background}
We begin by formalizing a model of a ReLU network, and defining various concepts related to polyhedra, PWA functions, and dynamical systems.
We then state the main problems we seek to address in this paper.
\subsection{ReLU Networks} \label{sec:relu_networks}
An $L$-layer feedforward neural network implements a function $\by = F_{\bTheta}(\bx)$, with the map $F_{\bTheta}: \R^{n} \rightarrow \R^{m}$ defined recursively by the iteration
\begin{align}
\label{Eq:NNBasic}
    \bz_{i} = \sigma_i(\bW_i \bz_{i-1} + \bb_i), \quad i = 1, \ldots, L
\end{align}
where $\bz_0 = \bx$ is the input, $\by = \bz_L$ is the output, and $\bz_i$ is the hidden layer output of the network at layer $i$.  The function $\sigma_i(\cdot)$ is the activation function at layer $i$, and $\bW_i\in\R^{l_i \times l_{i-1}}$ and $\bb_i\in\R^{l_i}$ are the weights and biases, respectively, where $l_i$ denotes the number of neurons in layer $i$. We assume $\sigma_{L}(\cdot)$ is an identity map and all hidden layer activations are ReLU,
\begin{align}
\label{Eq:ReLU}
    \sigma_i(\bz^\text{pre}_{i}) = [\max(0,z^\text{pre}_{i1}) \, \cdots \, \max(0,z^\text{pre}_{i l_i})]^\top,
\end{align}
where $\bz^\text{pre}_i = [z^\text{pre}_{i1} \, \cdots \, z^\text{pre}_{i l_i}]^\top = \bW_i\bz_{i-1} + \bb_i$ is the pre-activation hidden state at layer $i$. We can rewrite the iteration in (\ref{Eq:NNBasic}) in a homogeneous form
\begin{align}
\label{Eq:NNHomogeneous}
        \begin{bmatrix}
            \bz_i \\ 
            1
        \end{bmatrix}
            &= \sigma_i\Big(\bTheta_i \begin{bmatrix}
            \bz_{i-1} \\ 
            1
        \end{bmatrix}\Big),
\end{align}
where
\begin{subequations}
\begin{align}
\bTheta_i &= 
        \begin{bmatrix}
            \bW_i & \bb_i \\
            \mathbf{0}^\top & 1
        \end{bmatrix} \quad \text{for} \quad i = 1, \ldots, L-1, \label{Eq:ThetaDef} \\
\bTheta_L &= 
    \begin{bmatrix}
        \bW_L & \bb_L
    \end{bmatrix}. \label{Eq:ThetaLDef}
\end{align}
\end{subequations}
We define $\bTheta = (\bTheta_1, \ldots, \bTheta_L)$ as the tuple containing all the parameters that define the function $F_{\bTheta}(\cdot)$. 

Given an input $\bx$ to a ReLU network, every hidden neuron has an associated binary \textit{neuron activation} of zero or one, corresponding to whether the preactivation value is nonpositive or positive, respectively.\footnote{This choice is arbitrary. Some works use the opposite convention.} Specifically, the activation for neuron $j$ in layer $i$ is given by
\begin{align}
\label{Eq:AP}
    \lambda_{(ij)}(\bx) &= \begin{cases}
            1 \quad \text{for} \quad z^\text{pre}_{ij}(\bx) > 0 \\    
            0 \quad \text{for} \quad z^\text{pre}_{ij}(\bx) \le 0. 
                        \end{cases}
\end{align}
We define the activation pattern at layer $i$ as the binary vector $\blambda_{(i)}(\bx) = [\lambda_{(i1)}(\bx) \, \cdots \, \lambda_{(i l_i)}(\bx)]^\top$ and the activation pattern of the whole network as the tuple $\lambda(\bx) = (\blambda_{(1)}(\bx), \ldots, \blambda_{(L-1)}(\bx))$.  For a network with $N = \sum_{i = 1}^{L-1}l_i$ neurons there are $2^N$ possible combinations of neuron activations, however not all are realizable by the network due to inter-dependencies between neurons from layer to layer, as we will see later in the paper. Empirically, the number of activation patterns achievable by the network is better approximated by $N^{n}$ for input space with dimension $n$  \cite{hanin2019deep}. 

We can write equivalent expressions for the neural network function in terms of the activation pattern.  Define the diagonal activation matrix for layer $i$ as $\bLambda_{(i)}(\bx) = \text{diag}([\blambda_{(i)}(\bx)^\top \, 1]^\top)$, where a $1$ is appended at the end\footnote{Note that the homogeneous form adds a dummy neuron at each layer that is always active, since the last element of the hidden state is $1$ in homogeneous form, and thus always positive.} to match the dimensions of the homogeneous form in (\ref{Eq:NNHomogeneous}). Using the activation matrix, we can write the iteration defining the ReLU network from (\ref{Eq:NNHomogeneous}) as
\begin{align}
    \label{Eq:NNActivationDef}
    \begin{bmatrix}
    \bz_i\\
    1
    \end{bmatrix}
     = \bLambda_{(i)}(\bx)\bTheta_i\begin{bmatrix} \bz_{i-1}\\
    1
    \end{bmatrix}.
\end{align}
Finally, the map from input $\bx$ to the hidden layer preactivation $z_{ij}^\text{pre}$ can be written explicitly from the iteration (\ref{Eq:NNActivationDef}) as 
\begin{align}
\label{Eq:PreActExplicit}
    z_{ij}^\text{pre}(\bx) = \boldsymbol{\theta}_{ij}\Big(\prod_{l = 1}^{i-1}\bLambda_{(l)}(\bx)\bTheta_l\Big)\begin{bmatrix} \bx\\
    1
    \end{bmatrix},
\end{align}
where $\boldsymbol{\theta}_{ij}$ is the $j^{\text{th}}$ row of the matrix $\bTheta_i$. Similarly, the whole neural network function $\by = F_{\bTheta}(\bx)$ can be written explicitly as a map from $\bx$ to $\by$ as
\begin{align}
\label{Eq:NNExplicit}
    \by = \bTheta_L\Big(\prod_{i = 1}^{L-1}\bLambda_{(i)}(\bx)\bTheta_i\Big)\begin{bmatrix} \bx\\
    1
    \end{bmatrix}.
\end{align}
It is well known that in  (\ref{Eq:NNExplicit}), the output $\by$ is a continuous and PWA function of the input $\bx$ \cite{bengio,Hanin_2019,understanding}.  Next, we mathematically characterize PWA functions and give useful definitions.

\subsection{Polyhedra and PWA Functions}
\begin{definition}[(Convex) Polyhedron]
A convex polyhedron is a closed convex set $P \subset \mathbb{R}^n$ defined by a finite collection of $M$ halfspace constraints,
\[
P = \{\bx \mid \ba_i^\top \bx \le b_i, \quad i = 1, \ldots, M\},
\]
where $\ba_i,\bx \in \R^n$, $b_i \in \R$, and each $(\mathbf{a}_i, b_i)$ pair defines a halfspace constraint. Defining matrix $\mathbf{A} = [\mathbf{a}_1 \, \cdots \, \mathbf{a}_N]^\top$ and vector $\mathbf{b} = [b_1 \, \cdots \, b_N]^\top$, we can equivalently write
\begin{align*}
    P = \{\bx \mid \mathbf{Ax} \le \mathbf{b} \}. 
\end{align*}
We henceforth refer to convex polyhedra as just polyhedra. 
\end{definition}

The above representation of a polyhedron is known as the halfspace representation, or H-representation. 
Note:
\begin{itemize}
\item A polyhedron can be bounded or unbounded.
\item A polyhedron can occupy a dimension less than the ambient dimension (if $(\mathbf{a}_i, b_i) = \alpha_{ij}(-\mathbf{a}_j, -b_j)$ for some pairs $(i,j)$ and some positive scalars $\alpha_{ij}$).
\item A polyhedron can be empty ($\nexists \bx$ such that $\mathbf{Ax} \le \mathbf{b}$).
\item Without loss of generality, the halfspace constraints for a polyhedron can be normalized so that $\|\ba_i\|\in \{0,1\}$ (by dividing the unnormalized parameters $\ba_i$ and $b_i$ by $\|\ba_i\|$), where $\|\cdot\|$ is the $\ell_2$ norm. $\|\ba\| = 0$ represents a degenerate constraint that is trivially satisfied if $b_i \ge 0$.
\item An H-representation of a polyhedron may have an arbitrary number of redundant constraints, which are constraints that are implied by other constraints. 
\end{itemize}

\begin{definition}[Polyhedral Tessellation]\label{def:tessellation}
A polyhedral tessellation $\mathcal{P} = \{P_1, \ldots, P_N\}$ is a finite set of polyhedra that tessellate a set $\mathcal{X} \subset \mathbb{R}^n$. That is, $\cup_{i = 1}^NP_i = \mathcal{X}$ and $|P_i\cap P_j|_n = 0$, for all $i \not= j$, where $|\cdot|_n$ denotes the $n$-dimensional Euclidean volume of a set (the integral over the set with respect to the Lebesgue measure of dimension $n$). 
\end{definition}

Intuitively, the polyhedra in a tessellation together cover the set $\mathcal{X}$, and can only intersect with each other at shared faces, edges, vertices, etc. 
In the remainder of the paper when we refer to shared faces between neighboring polyhedra we mean the $n-1$ dimensional polyhedron given by the set intersection $P_i \cap P_j$.

\begin{definition}[Neighboring Polyhedra]
Two polyhedra, $P_i$ and $P_j$, are neighbors if their intersection has non-zero Euclidean volume in $n-1$ dimensions, that is, $|P_i \cap P_j|_{n-1} > 0$. 
\end{definition}

Intuitively, neighboring polyhedra are adjacent to one another in the tessellation, and share a common face. A polyhedral tessellation naturally induces a graph in which each node is a polyhedron and edges exist between neighboring polyhedra.

\begin{definition} [Piecewise Affine (PWA) Function] A PWA function is a function $F_{\text{PWA}}(\bx) : \mathbb{R}^{n} \rightarrow \mathbb{R}^{m}$ that is affine over each polyhdron in a polyhedral tessellation $\mathcal{P}$ of $\mathbb{R}^n$.  Specifically, $F_{\text{PWA}}(\bx)$ is defined as
\begin{align}
    F_{\text{PWA}}(\bx) = \mathbf{C}_k \bx + \mathbf{d}_k \ \ \ \forall \bx \in P_k \ \ \ \forall P_k \in \mathcal{P}, \label{eq:PWA}
\end{align}
where $\mathbf{C}_k \in \mathbb{R}^{m \times n}$, $\mathbf{d}_k \in \mathbb{R}^{m}$.
\end{definition}

Note that this requires a collection of tuples $\{(\mathbf{A}_1,\mathbf{b}_1, \mathbf{C}_1, \mathbf{d}_1), \ldots, (\mathbf{A}_N,\mathbf{b}_N, \mathbf{C}_N, \mathbf{d}_N)\}$, where $\mathbf{A}_k, \mathbf{b}_k$ define the polyhedron, and $\mathbf{C}_k, \mathbf{d}_k$ define the affine function over that polyhedron. 
We refer to the PWA representation in (\ref{eq:PWA}) as the \textit{explicit} PWA representation, as each affine map and polyhedron is written explicitly without specifying the relationship between them. 
There also exist other representations, such as the lattice representation \cite{lattice_orig,lattice_irred} and the, so called, canonical representation \cite{canonical, pwa_comp}. 

\subsection{Dynamical Systems}
In this paper we are concerned with analyzing dynamical systems represented by ReLU networks.
Specifically, we consider the situations when the ReLU network represents the state transition function for a controlled (\ref{eq:controlled}) or autonomous (\ref{eq:uncontrolled}) dynamical system,
\begin{align}
    \bx_{t+1} &= F_{\bTheta}(\bx_t, \mathbf{u}_t) \label{eq:controlled} \\
    \bx_{t+1} &= F_{\bTheta}(\bx_t) \label{eq:uncontrolled}.
\end{align}
For notational convenience, in the remainder of the paper we drop the $\bTheta$ subscript from $F$.
We next define key concepts and state the overall problems that we seek to solve.

\begin{definition}[$t$-Step Forward Reachable Set]
    Given an initial set of states $\mathcal{X}$ and a set of admissible control inputs $\mathcal{U}$, a $t$-step forward reachable set can be defined recursively,
    \begin{gather*}
    \mathcal{S}_0 = \mathcal{X} \\
        \mathcal{S}_{t} = \{\bx_{t} \mid \bx_{t} = F(\bx_{t-1}, \mathbf{u}_{t-1}),\\ \forall \bx_{t-1} \in \mathcal{S}_{t-1},\ \forall \mathbf{u}_{t-1} \in \mathcal{U}\}.
    \end{gather*}
\end{definition}

\begin{definition}[$t$-Step Backward Reachable Set]
    Given a final set of states $\mathcal{X}$ and a set of admissible control inputs $\mathcal{U}$, a $t$-step backward reachable set can be defined recursively,
    \begin{gather*}
    \mathcal{S}_0 = \mathcal{X} \\
        \mathcal{S}_{-t} = \{\bx_{-t} \mid \bx_{-t+1} = F(\bx_{-t}, \mathbf{u}_{-t}),\\ \forall \bx_{-t+1} \in \mathcal{S}_{-t+1},\ \forall \mathbf{u}_{-t} \in \mathcal{U}\}.
    \end{gather*}
\end{definition}

The ability to compute $t$-step forward and backward reachable sets will later enable us to compute control invariant sets and regions of attraction.
\begin{definition}[Control Invariant Set]
    Given a set of admissible control inputs $\mathcal{U}$, a control invariant set is a set of states for which there exists a sequence of control inputs such that the system state remains in the set for all time.
    If $\mathcal{S}$ is a control invariant set, then 
    \begin{gather*}
        \bx_0 \in \mathcal{S} \implies \exists \mathbf{u}_0, \ldots, \mathbf{u}_{t-1} \in \mathcal{U} \text{  s.t.  } \bx_t \in \mathcal{S} \quad \forall t \in \mathbb{N}
    \end{gather*}
    where $\bx_t = F(\bx_{t-1}, \mathbf{u}_{t-1})$ and $\mathbb{N}$ is the natural numbers.
\end{definition}

\begin{definition}[(Invariant) Region of Attraction]
    Given an autonomous dynamical system $\bx_{t+1} = F(\bx_t)$, an invariant region of attraction (ROA) is a set of states that asymptotically converges to an equilibrium state ($\bx_{\text{eq}}$) and for which sequences of states remain for all time. 
    If $\mathcal{S}$ is an ROA, then
    \begin{align*}
        \bx_0 \in \mathcal{S} &\implies \bx_t \in \mathcal{S} \quad \forall t \in \mathbb{N} \\
        \bx_0 \in \mathcal{S} &\implies \lim_{t \rightarrow \infty} \bx_t = \bx_{\text{eq}}
    \end{align*}
    where $\bx_t = F(\bx_{t-1})$.
\end{definition}


In this paper we seek to solve the following problems.
\begin{problem}
Given a ReLU network $\by = F(\bx)$ and a polyhedral input set $\mathcal{X} \subset \mathbb{R}^n$, compute the forward reachable set $\mathcal{Y}$. Similarly, given a polyhedral output set $\mathcal{Y}\subset \mathbb{R}^m$ compute the backward reachable set $\mathcal{X}$.
\end{problem}
\begin{problem}
Given a ReLU network that implements a discrete-time dynamical system $\bx_{t+1} = F(\bx_t)$ and state domain $\mathcal{X}$, identify the existence of stable fixed-point equilibria $\bx_{eq}$ and compute associated ROAs.
\end{problem}
\begin{problem}
Given a ReLU network that implements a discrete-time dynamical system $\bx_{t+1} = F(\bx_t, \mathbf{u}_t)$, a state domain $\mathcal{X}$, and set of admissible inputs $\mathcal{U}$, compute a control invariant set (if one exists).
\end{problem}
\begin{problem}
Given a ReLU network that implements a function $\by = F(\bx)$, determine whether the network is a homeomorphism (bijection with continuous inverse).
\end{problem}
In the next section we describe the RPM algorithm for transcribing the ReLU network function $\by = F_{\bTheta}(\bx)$ into its equivalent explicit PWA representation $\by = F_{\text{PWA}}(\bx)$. We then address the solution of the above problems in the subsequent sections.



\section{From ReLU Network to PWA Function}
\label{Sec:PWA}
First, we seek to construct the explicit PWA representation of a ReLU network. Our method enumerates each polyhedral region and its associated affine map directly from the ReLU network. In Sec.~\ref{Cells} we show how polyhedra and affine maps are computed from the activation pattern of a ReLU network. In Sec.~\ref{Essential} we show how polyhedral representations are reduced to a minimal form, which is used in Sec.~\ref{Neighbors} to determine neighboring polyhedra given a current polyhedron. This leads to a recursive procedure, that we call Reachable Polyhedral Marching (RPM), in which we explore an expanding front of polyhedra, ultimately giving the explicit PWA representation, as explained in Sec.~\ref{Cell Enumeration}.

\subsection{Determining Polyhedral Regions from Activation Patterns} 
\label{Cells}

We show here that the network activation pattern $\blambda(\bx)$ from (\ref{Eq:AP}) has a one-to-one correspondence with the regions of the polyhedral tessellation underlying the ReLU network. We show how to explicitly extract the half-space constraints defining the polyhedron from the activation pattern. 

Consider the expression for the preactivation value $z_{ij}^{\text{pre}}$ in (\ref{Eq:PreActExplicit}). We can re-write this equation as
\begin{align}
\label{Eq:PreActHyperplane}
    z_{ij}^{\text{pre}} = [\bar\ba_{ij}^\top \quad  -\bar b_{ij}]\begin{bmatrix}
        \bx\\
        1
    \end{bmatrix},
\end{align}
where
\begin{align}
\label{Eq:HalfspaceDef}
    [\bar\ba_{ij}^\top \quad -\bar b_{ij}] = \boldsymbol{\theta}_{ij}\Big(\prod_{l = 1}^{i-1}\bLambda_{(l)}(\bx)\bTheta_l\Big).
\end{align} 
The activation $\lambda_{(ij)}$ is decided by the test $z_{ij}^\text{pre} > 0$, which we can write as $\bar\ba_{ij}^\top\bx > \bar b_{ij}$, defining a halfspace constraint in the input space.  Specifically, we have the following cases,
\begin{align}
\label{Eq:HalfspaceCases}
    \begin{cases}
          \bar\ba_{ij}^{\top} \bx > \bar b_{ij} \quad \text{if} \quad  \lambda_{(ij)} = 1 \\    
          \bar\ba_{ij}^{\top} \bx \le \bar b_{ij} \quad \text{if} \quad  \lambda_{(ij)} = 0.
      \end{cases}
\end{align}
This defines a halfspace constraint, that may or may not be open (due to the strict `$>$' inequality). However, on the boundary $\bar\ba_{ij}^\top\bx = \bar b_{ij}$, we have $z_{ij}^{\text{pre}} = 0$ from (\ref{Eq:PreActHyperplane}), leading to the post activation hidden state $z_{ij} = \lambda_{(ij)}z_{ij}^{\text{pre}} = 0$. We see that the value of the activation $\lambda_{(ij)}$ is irrelevant on the boundary, as the post activation state evaluates to zero regardless. We can therefore replace the `$>$' with `$\ge$' in (\ref{Eq:HalfspaceCases}) without loss of generality. 

Finally, to obtain the standard normalized form for a halfspace constraint ($\ba_{ij}^\top \bx \le b_{ij}$), we define
\begin{subequations}
\begin{align}
    \ba_{ij} &= (1-2\lambda_{(ij)})\frac{\bar\ba_{ij}}{\|\bar\ba_{ij}\|} \\
    b_{ij} &= (1-2\lambda_{(ij)})\frac{\bar b_{ij}}{\|\bar\ba_{ij}\|},
\end{align}
\label{eq:normalize}
\end{subequations}
where, in the degenerate case when $\bar\ba_{ij} = \mathbf{0}$, we define $\ba_{ij} = \mathbf{0}$ and $b_{ij} = \bar b_{ij}$. Hence, we obtain one halfspace constraint $\ba_{ij}^\top \bx \le b_{ij}$ for each neuron activation state $\lambda_{(ij)}$.

Given a specific input $\bx$, we can then take the resulting activation pattern of the network $\lambda(\bx)$, and directly extract the halfspace constraints that apply at that activation state from (\ref{Eq:HalfspaceCases}). In fact, (\ref{Eq:HalfspaceDef}) shows that $(\ba_{ij}, b_{ij})$ are actually functions of the activation patterns at earlier layers in the network. Indeed, consider perturbing $\bx$ as $\bx' = \bx + \boldsymbol{\delta}$. The halfspace constraints $(\ba_{ij}, b_{ij})$ will remain fixed under this perturbation until $\bdelta$ is large enough to change the activation pattern of an earlier layer $\lambda_{(l)}$, $l<i$. Consider the set of all such perturbed input values $\bx$ that do not result in a change in any neuron activation. We have
\begin{align}
    \label{eq:hrep_2}
    \scalemath{0.94}{P_{\lambda} = \{\bx \mid \ba_{ij}^\top \bx \le b_{ij} \quad i = 1, \ldots, L-1, \, j = 1, \ldots, l_i\}}, 
\end{align}
which is a polyhedron. We see that for each activation pattern $\lambda$, there exists an associated polyhedron $P$ in the input space over which that activation pattern remains constant. The procedure for determining the polyhedron associated with an activation pattern is formalized in Algorithm \ref{Algo:ap_to_poly}. A unique activation pattern $\lambda_k$ can then be defined for each affine region $k$. To be clear, we refer to the activation pattern for region $k$ as $\lambda_k$, and use subscripts in parentheses to refer to the specific layer and neuron activation values. Lastly, for a fixed activation pattern, from (\ref{Eq:NNExplicit}) the ReLU network simplifies to the affine map $\by = \mathbf{C}_k \bx + \mathbf{d}_k$ where
\begin{align}
\label{eq:CdDef}
    \begin{bmatrix}
        \bC_k & \bd_k
    \end{bmatrix} &= 
    \bTheta_L\Big(\prod_{i = 1}^{L-1}\bLambda_{k,(i)}(\bx)\bTheta_i\Big).
\end{align}

\begin{algorithm}[h!]
\setlength\belowcaptionskip{-0.5\baselineskip}
\SetAlgoLined
\DontPrintSemicolon
 \KwIn{$\lambda$}
 \KwOut{$\bA, \bb$}
 $k \leftarrow 1$ \\
\For{$i = 1,\ldots,L-1$, $j = 1,\ldots,l_i$}{
$\Bar{\mathbf{a}}_{ij},\ \Bar{b}_{ij} \leftarrow$ (\ref{Eq:HalfspaceDef}) \\
$\mathbf{a}_{ij},\ b_{ij} \leftarrow$ (\ref{eq:normalize}) \\
$\bA[k,:],\ \bb[k] \leftarrow \mathbf{a}_{ij},\ b_{ij}$ \\
$k \leftarrow k+1$
}
\Return{$\bA, \bb$}
\caption{Polyhedron from Activation Pattern}
\label{Algo:ap_to_poly}
\end{algorithm}

\subsection{Finding Essential Constraints} \label{Essential}
As mentioned previously, a polyhedron may have redundant constraints. We find that many of the constraints for the polyhedron $P_k$ generated by $\lambda_k$ are either duplicates or redundant, and can thus be removed. We define more formally the concepts of duplicate and redundant constraints.
\begin{definition}[Duplicate Constraints]
A constraint $\mathbf{a}_j^{\top} \bx \le b_j$ is duplicate if there exists a scalar $\alpha > 0$ and a constraint $\mathbf{a}_i^{\top} \bx \le b_i$, $i<j$, such that $\alpha [\mathbf{a}^\top_j\ b_j] = [\mathbf{a}^\top_i\ b_i]$. \label{def: duplicate}
\end{definition}
\begin{definition}[Redundant \& Essential Constraints]
A constraint is redundant if the feasible set does not change upon its removal. An essential constraint is not redundant. \label{def: redundant}
\end{definition}

We next describe how to remove the redundant constraints in a H-representation, leaving only the essential halfspace constraints. We first normalize all constraints, remove any duplicate constraints, and consider the resulting  H-representation $\mathbf{A}\bx \le \mathbf{b}$. To determine if the remaining $i^{\text{th}}$ constraint is essential or redundant, we define a new set of constraints with the $i^{\text{th}}$ constraint removed,
\begin{subequations}
    \label{eq:pre-LP}
    \begin{align}
        \tilde{\mathbf{A}} = [\mathbf{a}_1 \ldots \mathbf{a}_{i-1}\ \mathbf{a}_{i+1} \ldots \mathbf{a}_M]^\top \\
         \tilde{\mathbf{b}} = [b_1 \ldots b_{i-1}\ b_{i+1} \ldots b_M]^\top,
    \end{align}
\end{subequations}
and solve the linear program
\begin{subequations}
    \label{eq:LP}
    \begin{align}
        \max_{\bx} \quad & \mathbf{a}^{\top}_i \bx\\
        \textrm{subject to} \quad & \tilde{\mathbf{A}}\bx \le \tilde{\mathbf{b}}.
    \end{align}
\end{subequations}
If the optimal objective value is less than or equal to $b_i$, constraint $i$ is redundant. 
Note, it is critical that any duplicate constraints are removed before this procedure. We formalize this procedure in Algorithm \ref{Algo:essential hrep}.
\begin{algorithm}[h!]
\SetAlgoLined
\DontPrintSemicolon
 \KwIn{$\lambda$}
 \KwOut{$\bA, \bb$}
 $\bA, \bb \leftarrow$ Algorithm \ref{Algo:ap_to_poly} \\
 $\bA, \bb \leftarrow$ remove duplicate constraints of $\bA, \bb$  \\
 $\tilde{\bA}, \tilde{\bb} \leftarrow$ $\bA, \bb$ \algorithmiccomment{make copy of $\bA, \bb$} \\
\For{$\ba_i, b_i \in \bA, \bb$}{
$\tilde{\bA}, \tilde{\bb} \leftarrow$ remove $\ba_i, b_i$ from $\tilde{\bA}, \tilde{\bb}$ \\
$p^* \leftarrow$ optimal objective value of (\ref{eq:LP})  \\
\If{$p^* > b_i$}
{add $\ba_i, b_i$ back to $\tilde{\bA}, \tilde{\bb}$}

}
$\bA, \bb \leftarrow \tilde{\bA}, \tilde{\bb}$ \\
\Return{$\bA, \bb$}
\caption{Essential H-representation}
\label{Algo:essential hrep}
\end{algorithm}

In the worst case, a single LP must be solved for each constraint to determine whether it is essential or redundant. However, heuristics exist to practically avoid this worst case complexity. For computational efficiency, Algorithm \ref{Algo:essential hrep} can be modified to first identify and remove a subset of redundant constraints using the bounding box heuristic \cite{morari}. We observe that this can result in identifying as many as $90\%$ of the redundant constraints. We find that other heuristics such as ray-shooting do not improve performance in our tests.


\begin{figure*}[t]
    \setlength\belowcaptionskip{-0.5\baselineskip}
     \centering
     \begin{subfigure}[b]{0.24\textwidth}
         \centering
         \includegraphics[width=\textwidth]{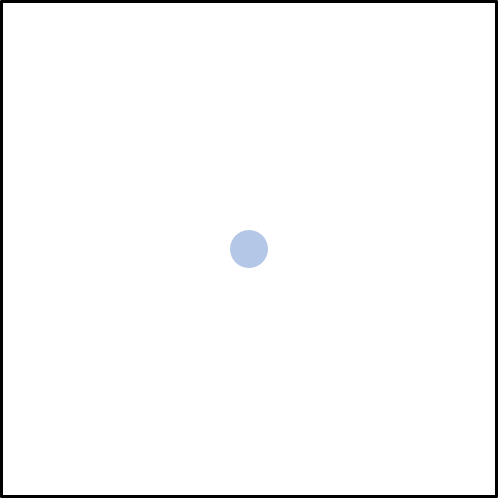}
         \caption{}
     \end{subfigure}
     \hfill
     \begin{subfigure}[b]{0.24\textwidth}
         \centering
         \includegraphics[width=\textwidth]{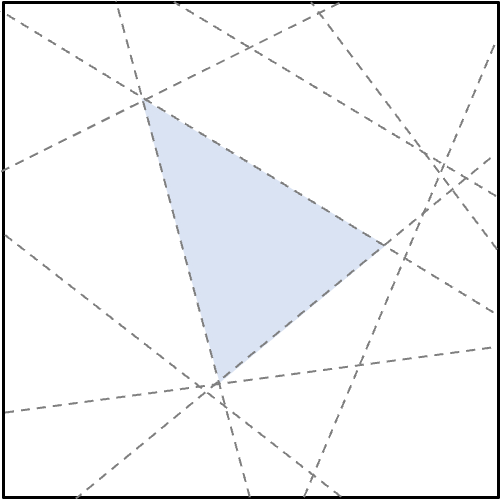}
         \caption{}
     \end{subfigure}
     \hfill
     \begin{subfigure}[b]{0.24\textwidth}
         \centering
         \includegraphics[width=\textwidth]{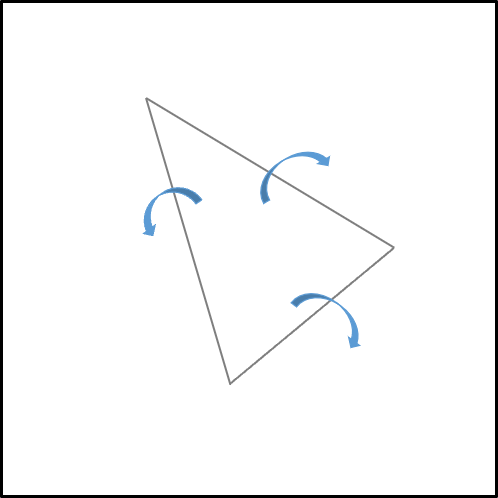}
         \caption{}
     \end{subfigure}
     \hfill
     \begin{subfigure}[b]{0.24\textwidth}
         \centering
         \includegraphics[width=\textwidth]{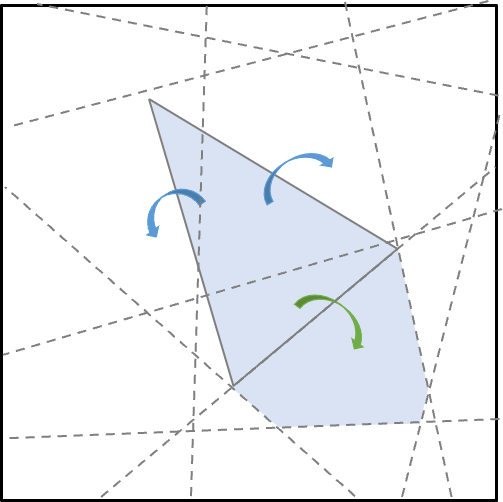}
         \caption{}
     \end{subfigure}
        \caption{This figure illustrates the main steps of the RPM algorithm (Algorithm \ref{Algo:cell_enumeration}). (a) The algorithm is initialized with a point in the input space that is evaluated by the ReLU network to determine the activation pattern $\lambda$. (b) Then the polyhedron corresponding to $\lambda$ is computed using Algorithm \ref{Algo:ap_to_poly}. (c) Next, the essential constraints for the polyhedron, which in turn connect it to unexplored neighboring regions (indicated by blue arrows), are determined using Algorithm \ref{Algo:essential hrep}. (d) Then the activation pattern for a neighboring region (indicated by the green arrow) is found using Algorithm \ref{Algo:neighbor_ap}. The process then repeats by recursively exploring neighboring regions in this manner until the entire input set is explored.}
        \label{fig:alg_steps}
\end{figure*}

\subsection{Determining Neighboring Activation Patterns} \label{Neighbors}
Consider two neighboring polyhedra $P_k$ and $P_{k'}$ such that their shared face is identified by $\ba_\eta^\top \bx = b_\eta$ and $\ba_\eta^\top \bx \le b_\eta$ is an essential constraint for $P_k$, while $-\ba_\eta^\top \bx \le -b_\eta$ is an essential constraint for $P_{k'}$. We call $\ba_\eta^\top \bx \le b_\eta$ the neighbor constraint. Given the neuron activation pattern $\lambda$ for $P_k$, we next describe a procedure to find the activation pattern $\lambda'$ for $P_{k'}$, from which we can find the essential halfspace constraints from the procedure described in the previous section. This allows us to systematically generate all the activation patterns that are realizable by the network, and to obtain the polyhedra and affine maps associated with each of those activation patterns.  

Intuitively, to generate $\lambda_{k'}$ one ought to simply flip the activation of the neurons defining $\ba_\eta^\top \bx \le b_\eta$, and compute all the resulting new halfspace constraints from later layers in the network from (\ref{Eq:HalfspaceDef}). However, this intuitive procedure is incomplete because it does not correctly account for the influence that one neuron may have on other neurons in later layers. Flipping one neuron may lead to other neurons in later layers being flipped as well. To correctly determine the downstream effects of flipping a neuron, we provide a theorem for when neuron activations must be flipped.

\begin{theorem}[Neighboring Activation Patterns]
\label{thm: activation_flipping}
For a given region $P_k$, each neuron defines a hyperplane $\ba_{k,ij}^\top \bx = b_{k,ij}$, as given by (\ref{eq:normalize}). Suppose regions $P_k$ and $P_{k'}$ neighbor each other and are separated by $\ba_\eta^\top \bx = b_\eta$ where $P_{k} \subset \{\bx \mid \ba_\eta^\top \bx \le b_\eta \}$. Then,
\begin{enumerate}
    \item if $[\ba_{k,ij}^\top\quad  -b_{k,ij}] \neq \alpha[\ba_\eta^\top\quad -b_\eta]$ for some $\alpha \in \{0,1\}$, the activation of neuron $ij$ does not change, $\lambda_{k,ij} = \lambda_{k',ij}$.
    \item If $[\ba_{k,ij}^\top\quad  -b_{k,ij}] = \alpha[\ba_\eta^\top\quad -b_\eta]$ for some $\alpha \in \{0,1\}$, then $[\ba_{k',ij}^\top\quad  -b_{k',ij}] = \alpha[\ba_\eta^\top\quad -b_\eta]$ for some $\alpha \in \{-1,0\}$, and the activation $\lambda_{k',ij}$ should be set to ensure this holds.
\end{enumerate}
\end{theorem}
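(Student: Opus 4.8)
The plan is to exploit two facts: every intermediate preactivation $z_{ij}^{\text{pre}}(\bx)$ is a \emph{continuous} function of $\bx$ (being a composition of affine maps and ReLUs), and within each region it is \emph{affine}. Fix the shared face $\mathcal{F} = P_k \cap P_{k'}$, which by hypothesis is $(n-1)$-dimensional and lies in the hyperplane $H_\eta = \{\bx \mid \ba_\eta^\top \bx = b_\eta\}$. For a neuron $ij$, write $g_k^{ij}(\bx) = \bar\ba_{k,ij}^\top \bx - \bar b_{k,ij}$ and $g_{k'}^{ij}(\bx) = \bar\ba_{k',ij}^\top \bx - \bar b_{k',ij}$ for the affine expressions of $z_{ij}^{\text{pre}}$ on $P_k$ and $P_{k'}$, obtained from (\ref{Eq:HalfspaceDef}); note $[\ba_{k,ij}^\top\ {-}b_{k,ij}]$ is just the coefficient vector of $g_k^{ij}$ up to the nonzero normalization/sign factor in (\ref{eq:normalize}). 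The first step is to observe that continuity forces $g_k^{ij} = g_{k'}^{ij}$ on $\mathcal{F}$; since $g_k^{ij} - g_{k'}^{ij}$ is affine and vanishes on an $(n-1)$-dimensional subset of $H_\eta$, it must be a scalar multiple of the defining form of $H_\eta$, giving
\[
g_{k'}^{ij}(\bx) = g_k^{ij}(\bx) - c_{ij}\,(\ba_\eta^\top \bx - b_\eta)
\]
for some scalar $c_{ij}$. I would then split into the two cases according to whether the coefficient vector of $g_k^{ij}$ is parallel to $[\ba_\eta^\top\ {-}b_\eta]$.

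For case 1, the hypothesis says this coefficient vector is not parallel to $[\ba_\eta^\top\ {-}b_\eta]$ (and not the degenerate zero), so the zero set of $g_k^{ij}$ is not $H_\eta$ and $g_k^{ij}$ does not vanish identically on $H_\eta$; hence it is nonzero on a dense open subset of $\mathcal{F}$. Picking $\bx^*$ in the relative interior of $\mathcal{F}$ with $z_{ij}^{\text{pre}}(\bx^*)\neq 0$, continuity yields a neighborhood of $\bx^*$ — meeting the interiors of both $P_k$ and $P_{k'}$ — on which $z_{ij}^{\text{pre}}$ has constant sign. Since the activation is constant on each region, this gives $\lambda_{k,ij} = \lambda_{k',ij}$.

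For case 2, the hypothesis makes the coefficient vector of $g_k^{ij}$ parallel to $[\ba_\eta^\top\ {-}b_\eta]$ (or degenerate). By the displayed identity, $g_{k'}^{ij}$ is then also a scalar multiple of $\ba_\eta^\top \bx - b_\eta$, so the $P_{k'}$-hyperplane of neuron $ij$ again coincides with $H_\eta$ (or is degenerate), i.e. $[\ba_{k',ij}^\top\ {-}b_{k',ij}] = \alpha[\ba_\eta^\top\ {-}b_\eta]$ for some $\alpha$. To pin down $\alpha \in \{-1,0\}$ I would use that $P_{k'}$ lies on the side $\ba_\eta^\top \bx \ge b_\eta$ and is full-dimensional: the sign choice $\alpha = 1$ would impose the constraint $\ba_\eta^\top \bx \le b_\eta$ and force $P_{k'}\subseteq H_\eta$, a contradiction, so only $\alpha \in \{-1,0\}$ is admissible; the activation $\lambda_{k',ij}$ is then set through (\ref{eq:normalize}) precisely to realize this outward-pointing sign.

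I expect the main obstacle to be the rigorous justification of the affine-agreement step — that continuity of the network, rather than any layer-by-layer bookkeeping, already forces $g_k^{ij}$ and $g_{k'}^{ij}$ to differ only by a multiple of the face form, which is what cleanly controls how flips cascade into later layers. Closely tied to this is ensuring the genericity argument in case 1 is sound, namely that $\mathcal{F}$ is genuinely $(n-1)$-dimensional so a relative-interior point $\bx^*$ with nonzero preactivation exists. The degenerate sub-cases $\alpha = 0$, where a preactivation is identically zero and (\ref{eq:normalize}) falls back to $\ba_{ij}=\mathbf{0}$, will also need to be tracked so the argument does not implicitly divide by $\|\bar\ba_{ij}\|=0$.
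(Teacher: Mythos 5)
Your route is, in substance, the paper's route: continuity of the preactivation $z_{ij}^{\text{pre}}$ across the shared face, the fact that an affine function vanishing on an $(n{-}1)$-dimensional subset of $H_\eta$ must be proportional to $[\ba_\eta^\top\ \ {-}b_\eta]$, and full-dimensionality arguments to fix the admissible signs. Your unified identity $g_{k'}^{ij}(\bx) = g_k^{ij}(\bx) - c_{ij}(\ba_\eta^\top\bx - b_\eta)$ is a clean repackaging of the paper's second lemma (Effects of Activation Flips), which proves the proportionality concretely: it constructs $n$ points of $P_k \cap P_{k'}$ whose homogenizations $[\bx_i^\top\ \ 1]^\top$ are linearly independent (splitting into $b_\eta = 0$, handled by a scaling trick, and $b_\eta \neq 0$, handled via simplex vertices and an affine-to-linear-independence lemma), and then applies rank--nullity to get $\text{nullity}(\mathbf{X}) = 1$, forcing both $[\ba_{k',ij}^\top\ \ {-}b_{k',ij}]^\top$ and $[\ba_\eta^\top\ \ {-}b_\eta]^\top$ into the same one-dimensional null space. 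That construction is exactly the rigor you flagged as the main obstacle, and it subsumes your ``two distinct hyperplanes meet in dimension at most $n-2$'' reasoning. Your exclusion of $\alpha = 1$ in case 2 matches the paper's closing step verbatim.

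There is, however, one concrete gap in your case 1. You state that the hypothesis makes the coefficient vector of $g_k^{ij}$ ``not parallel'' to $[\ba_\eta^\top\ \ {-}b_\eta]$, but the hypothesis only excludes $\alpha \in \{0,1\}$: the anti-parallel sub-case $[\ba_{k,ij}^\top\ \ {-}b_{k,ij}] = -[\ba_\eta^\top\ \ {-}b_\eta]$ satisfies the case-1 hypothesis, and there the zero set of $g_k^{ij}$ \emph{is} $H_\eta$, so $g_k^{ij}$ vanishes identically on $\mathcal{F}$ and your dense-open-subset / sign-propagation argument collapses — no point $\bx^*$ with nonzero preactivation exists on the face. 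This sub-case must be ruled out first, which is precisely how the paper opens its Stationary Neurons lemma: $\ba_{k,ij}^\top\bx \le b_{k,ij}$ is one of the defining constraints of $P_k$ by (\ref{eq:hrep_2}), and the anti-parallel choice reads $\ba_\eta^\top\bx \ge b_\eta$, which together with $P_k \subset \{\bx \mid \ba_\eta^\top\bx \le b_\eta\}$ pins $P_k$ inside $H_\eta$, contradicting the full dimensionality of regions in the tessellation (Definition \ref{def:tessellation}). Since this is the same one-line dimension argument you already deploy to exclude $\alpha = 1$ in case 2, the fix is immediate — but as written, case 1 fails on that sub-case. Your instinct to track the degenerate $\alpha = 0$ bookkeeping (where (\ref{eq:normalize}) sets $\ba_{ij} = \mathbf{0}$) is sound and consistent with how the paper handles it.
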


The proof for Theorem \ref{thm: activation_flipping} is in the Appendix. Then, the procedure to generate a neighboring activation pattern is defined in Algorithm \ref{Algo:neighbor_ap}. We present Algorithm \ref{Algo:neighbor_ap} in as simple a manner as possible, however, for a more efficient approach, one need not loop over all neurons in the network, but only those that meet condition 2) of Theorem \ref{thm: activation_flipping}.



\begin{algorithm}[!h]
\SetAlgoLined
\DontPrintSemicolon
 \KwIn{$\lambda_k$, $\mathbf{a}_\eta$, $b_\eta$, $\bTheta$}
 \KwOut{$\lambda_{k'}$}
 $\lambda_{k'} \leftarrow  \lambda_k$ \algorithmiccomment{Initialize neighbor region $\lambda$} \\
 
\For{$i = 1,\ldots,L-1$, $j = 1,\ldots,l_i$}{ 
$\mathbf{a}_{k',ij},\ b_{k',ij} \leftarrow$ (\ref{eq:normalize}) \\
\uIf{$\mathbf{a}_{k',ij} = \mathbf{a}_\eta \quad \textbf{\upshape and}\quad b_{k',ij} = b_\eta$}
{$\lambda_{k',ij} \leftarrow \neg \lambda_{k,ij} $}
\uElseIf{$\mathbf{a}_{k',ij} = \mathbf{0} \quad \textbf{\upshape and}\quad b_{k',ij} = 0$}
{$\lambda_{k',ij} \leftarrow 0$}
}
\Return{$\lambda_{k'}$}
\caption{Neighboring Activation Pattern}
\label{Algo:neighbor_ap}
\end{algorithm}

\subsection{Reachable Polyhedral Marching} \label{Cell Enumeration}

Building on Algorithms \ref{Algo:essential hrep} \& \ref{Algo:neighbor_ap}, we now define our main algorithm, Reachable Polyhedral Marching (RPM) in Algorithm \ref{Algo:cell_enumeration} for explicitly enumerating all polyhedra in the input space of a ReLU network, resulting in the explicit PWA representation. 
First we start with a polyhedral set of inputs we would like to enumerate affine regions over.
This set of inputs can also be $\mathbb{R}^n$, but in most applications it makes sense to consider a bounded set of inputs.
Then, we find an initial point within the input set and evaluate the network to find the activation pattern. 
From this, the essential H-representation is found using Algorithm \ref{Algo:essential hrep}. 
For each essential constraint we generate a neighboring activation pattern using Algorithm \ref{Algo:neighbor_ap}. 
The process then repeats with each new neighboring activation pattern being added to a working set. 
For a given starting polyhedron each neighbor polyhedron is enumerated, and since all polyhedra are connected via neighbors, Algorithm \ref{Algo:cell_enumeration} is guaranteed to enumerate every polyhedron in the input space. 
Figure \ref{fig:alg_steps} illustrates the steps of RPM and Figure \ref{fig:cell_enum} shows the result of applying Algorithm \ref{Algo:cell_enumeration} to a ReLU network trained to model a quadratic function.

The bulk of the computation for Algorithm \ref{Algo:cell_enumeration} is dedicated to solving the LPs needed to retrieve the essential H-representation for each polyhedron (Algorithm \ref{Algo:essential hrep}).
In the worst case, for each polyhedron in the input space, Algorithm \ref{Algo:cell_enumeration} solves an LP for each neuron in the network. 
As noted in Section \ref{sec:relu_networks}, the number of polyhedra for the algorithm to explore scales according to $\mathcal{O}(N^n)$ where $N$ is the number of neurons and $n$ is the dimension of the input space.
Thus, in the worst case, the number of LPs solved by the algorithm scales according to $\mathcal{O}(N^{2n})$.
This exponential complexity in the input dimension is unsurprising and is inherent to the exact analysis of ReLU neural networks \cite{reluplex}.

\begin{algorithm}
\SetAlgoLined
\DontPrintSemicolon
 \KwIn{$\lambda_{0}$, $\bTheta$}
 \KwOut{Explicit PWA representation}
 PWA = $\emptyset$;  visited = $\emptyset$ \\
 working set = \{$\lambda_0$\} \\
 \While{$\text{\upshape working set} \neq \emptyset$}{
 $\lambda_k \leftarrow$ pop next $\lambda$ off working set \\
 $\bA_k, \bb_k \leftarrow$ Algorithm \ref{Algo:essential hrep} \algorithmiccomment{Retrieve essential H-rep} \\
 $\mathbf{C}_k,\ \mathbf{d}_k \leftarrow$ (\ref{eq:CdDef}) \algorithmiccomment{Retrieve affine map} \\
 push ($\bA_k, \bb_k, \bC_k, \bd_k$) onto PWA \\
 \For(\algorithmiccomment{For each neighbor}){$\mathbf{a}_{k,i}, b_{k,i} \in \bA_k, \bb_k$}{
 $\lambda_{k'} \leftarrow$ Algorithm \ref{Algo:neighbor_ap} \algorithmiccomment{$\ba_\eta, b_\eta = \ba_{k,i}, b_{k,i}$} \\
 \If{$\lambda_{k'} \notin \text{\upshape visited} \cup \text{\upshape working set}$}{
 push $\lambda_{k'}$ onto working set \\
 }
 }
 push $\lambda_k$ onto visited\\
 }
 \Return{\text{\upshape PWA}}
 \caption{Reachable Polyhedral Marching}
 \label{Algo:cell_enumeration}
\end{algorithm}

\section{Reachability} 
\label{Sec:Reachability}
\subsection{Forward Reachability} \label{Forward}
The forward reachable set of a PWA function over some input set is simply the union over the forward reachable sets of each polyhedron comprising the input set. The image of a polyhedron under an affine map is
\begin{align}
    P_{image} = \{\by \mid \by=\mathbf{Cx}+\mathbf{d},\quad \mathbf{Ax} \le \mathbf{b}\}. \label{eq:forward reach}
\end{align}
For invertible $\mathbf{C}$, the H-representation of the image is
\begin{align}
    P_{image} = \{\by \mid \mathbf{AC}^{-1}\by \le \mathbf{b} + \mathbf{AC}^{-1} \mathbf{d}\}. \label{eq:image}
\end{align}
In the case the affine map is not invertible, more general polyhedral projection methods such as block elimination, Fourier-Motzkin elimination, or parametric linear programming can compute the H-representation of the image \cite{cdd_manual, mpLP}. 
Our implementation uses the block elimination projection in the case of a non-invertible affine map \cite{cdd}.

Our RPM algorithm is used to perform forward reachability as follows. We first specify a polyhedral input set whose image through the ReLU network we want to compute.  This is a set over which to perform the RPM algorithm. For each activation pattern $\lambda_k$ we also compute the image of $(\bA_k, \bb_k)$ under the map $\by = \mathbf{C}_k\bx + \mathbf{d}_k$. To do this, an additional line is introduced between lines 9 and 10 of Algorithm \ref{Algo:cell_enumeration}
\begin{align}
    (\bA_k, \bd_k)_{forward} \leftarrow \text{project}(\bA_k, \bd_k, \mathbf{C}_k, \mathbf{d}_k)
\end{align}
where $\text{project}(\bA, \bb, \bC, \bd)$ applies (\ref{eq:image}) if $\mathbf{C}$ invertible and block elimination on (\ref{eq:forward reach}) otherwise.

\subsection{Backward Reachability} \label{Backward}
The preimage of a polyhedron under an affine map is
\begin{subequations}
    \begin{align}
        P_{pre} &= \{\bx \mid \by=\mathbf{Cx}+\mathbf{d},\quad \mathbf{Ay} \le \mathbf{b}\} \\
        &= \{\bx \mid \mathbf{ACx} \le \mathbf{b}-\mathbf{Ad}\}. \label{eq:backward reach}
    \end{align}
\end{subequations}
Like forward reachability, performing backward reachability only requires a small modification to Algorithm \ref{Algo:cell_enumeration}. We first specify a polyhedral output set whose preimage we would like to compute.  For each activation pattern $\lambda_k$, we then compute the intersection of the preimage of the given output set under the map $\mathbf{C}_k\bx + \mathbf{d}_k$ with $P_k$ (the polyhedron for which the affine map is valid). Two additional lines are thus introduced between lines 9 and 10 of Algorithm \ref{Algo:cell_enumeration}
\begin{subequations}
    \begin{gather}
    P_{pre} \leftarrow \text{Equation}\ \ref{eq:backward reach} \label{eq:p_in} \\
    (\bA_k, \bb_k)_{backward} \leftarrow P_k \cap P_{pre}. \label{eq:h_back}
    \end{gather}
\end{subequations}
Multiple backward reachable sets can be solved for simultaneously at the added cost of repeating (\ref{eq:p_in}) and (\ref{eq:h_back}) for each output set argument to Algorithm \ref{Algo:cell_enumeration}.

Finally, we address the issue of finding $t$-step forward and backward reachable sets for a dynamical system represented as a neural network, $\bx_{t+1} = F(\bx)$.
For a ReLU network that is applied iteratively over $t$ time steps, $\bx_t = F \circ \cdots \circ F(\bx_0)$.
Thus, we can concatenate the ReLU network $F$ with itself $t$ times, forming one larger network representing a $t$-step update.
If the original network $F(\bx)$ has $r$ hidden layer neurons and $N$ layers, the concatenated network has $tr$ neurons and $t(N-1) + 1$ layers.  
We then simply perform RPM for forward or backward reachability on the concatenated network.
Thus, computing a $t$-step reachable set is as expensive as computing a $1$-step reachable set for a network with $t$ times the depth.


\section{Invariant Sets}
\label{Sec:ROA}
\subsection{Control Invariant Sets}
In this subsection we discuss a strategy for computing control invariant sets once we have the PWA representation of a dynamical system $\bx_{t+1} = F(\bx_t, \mathbf{u}_t)$.
This strategy for computing control invariant sets is implemented in MPT3 specifically for PWA dynamical systems \cite{MPT3}.
First, we note that for some state transition functions $F$, there may not exist a control invariant set.
Existence depends on the state domain $\mathcal{X}$ and the set of admissible control inputs $\mathcal{U}$, as well as the effect of control inputs on the state transitions given by $F$.
If a control invariant set exists, it can be computed via the following recursion which uses backward reachability,
\begin{subequations}
\begin{gather}
    X_0 = \mathcal{X} \\
    X_{i+1} = \{\bx \mid F(\bx,\mathbf{u}) \in X_{i}, \mathbf{u} \in \mathcal{U} \},
\end{gather}
\end{subequations}
for a compact polyhedral state domain $\mathcal{X}$ and set of admissible inputs $\mathcal{U}$. 
If $X_{i+1} = X_i$, then $X_i$ is control invariant.

\subsection{Regions of Attraction}
In this subsection we consider the problem of computing invariant ROAs given a ReLU network that represents an autonomous dynamical system $\bx_{t+1} = F(\bx_t)$. 
One of the most common goals in designing a control policy is to stabilize the system around some equilibrium.
By identifying a ROA, we are identifying a set of states which is stabilized.

The strategy we use to compute ROAs is as follows.
We first use RPM to enumerate each affine region of $F$ and within each region check for the existence of a stable fixed point.
If one is found, we then compute a small `seed ROA' around this equilibrium. 
Then we use backward reachability to find the $t$-step backward reachable set of the seed ROA, which itself is guaranteed to be an ROA (see \cite{pwa_lygeros}). 
The volume of the resulting ROA grows as more backward reachability steps are taken. Thus, the maximal ROA is better approximated by using more backward reachability steps.

In the following subsections we detail the procedure for finding fixed points, checking whether they are stable equilibria, and computing seed ROAs. 
We also describe how the computation of the ROA can be accelerated by exploiting the connected walk of RPM, and under what conditions this accelerated method captures the same result as the original method.
Finally, as with control invariant sets, there are some dynamical systems for which no ROA exists.
However, in our outlined procedure, the time taken to find fixed points (if they exist) is a small fraction of the time taken to find the ROA through backward reachability.
Thus, verifying that no ROA exists can be accomplished with relative ease (compared to verifying that no control invariant set exists, for example).

\subsubsection{Finding Fixed Points}
To find fixed points of a ReLU network we use RPM to solve for the explicit PWA representation and for each region $k$ solve the following system of equations for $\bx$.
\begin{align}
    \mathbf{C}_k\bx + \mathbf{d}_k &= \bx \label{eq:fp}
\end{align}
We consider $\bx$ a fixed point if it is the unique solution of (\ref{eq:fp}) and lies on the interior of $P_k$. 
A fixed point in $P_k$ is a stable equilibrium if $\mathbf{C}_k$ is a stable dynamics matrix. 
That is, if the eigenvalues of $\mathbf{C}_k$ have magnitude strictly less than one.
When all eigenvalues have magnitude strictly less than one, trajectories near the fixed point will asymptotically converge.

\subsubsection{Finding Seed ROAs} \label{seed}
For our backward reachability algorithm we require a polyhedral seed ROA to start from. Once a stable fixed point is located we translate the corresponding affine system to a linear one via a coordinate transformation. We can then use existing techniques for computing invariant polyhedral ROAs of stable linear systems \cite{hennet1995discrete}. 
Once a polyhedral ROA is found it can be scaled it up or down to ensure it is a subset of the polyhedron for which the local affine system is valid.
Then, it is well known that the backward reachable set of an invariant set is also an invariant set \cite{pwa_lygeros}.
Thus by finding the backward reachable set of the seed ROA, we grow the ROA.

\subsection{An Accelerated Procedure} \label{sec:homeo}
The backward reachability algorithm as previously presented can be made more efficient if RPM is restricted to only enumerated a connected backward reachable set.
Knowing that the fixed point $\bx_{eq}$ must be in the ROA, initializing RPM at the fixed point, and then only enumerating a connected backward reachable set of the seed ROA can significantly reduce the computation time.
In this case, we can alter the backward reachability algorithm to only add neighbor polyhedra to the working set if they are neighbors of a polyhedron known to be in the backward reachable set. 

The resulting set of states found by this procedure still converges to the equilibrium, but since we may not be computing the entire backward reachable set of the seed ROA, it may not be an invariant set.
Next we consider sufficient conditions for this accelerated procedure to still produce an ROA which is an invariant set.

It is known that the backward reachable set of a connected set will also be connected if the function mapping between the input and output spaces is a homeomorphism.
\begin{definition}[Homeomorphism] \label{def:homeo}
A function $f: \mathcal{X} \rightarrow \mathcal{Y}$ between two metric spaces is a homeomorphism if (i) $f$ is bijective, (ii) $f$ is continuous, (iii) $f^{-1}$ is continuous.
Furthermore, the composition of homeomorphisms is also a homeomorphism.
\end{definition}
PWA homeomorphic conditions are given in \cite{homeo2}.
\begin{theorem}[Homeomorphic PWA Functions \cite{homeo2}]
For affine maps $\mathbf{C}_k\bx + \mathbf{d}_k$ of regions $P_k \in \mathcal{P}$, let $\mathbf{C}[r]$ denote the matrix consisting of the first $r$ rows and columns of $\mathbf{C}$. If for each $r = 1, \ldots, n$,
\begin{align}
   \text{sgn det}(\mathbf{C}_k[r]) = \sigma_r \quad \forall k
\end{align}
where $\sigma_r \neq 0$ and sgn det() is the sign of the determinant, then the PWA function is a homeomorphism. 
\end{theorem}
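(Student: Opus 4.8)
The plan is to show that the hypothesis forces $F_{\text{PWA}}$ to be a bijection of $\R^n$ onto itself, after which continuity of the inverse comes essentially for free. Indeed, $F_{\text{PWA}}$ is continuous by assumption, and a continuous injective map $\R^n \to \R^n$ is open by invariance of domain, so any continuous bijection of $\R^n$ onto itself is automatically a homeomorphism (equivalently, since $\det(\bC_k) = \det(\bC_k[n])$ has fixed nonzero sign $\sigma_n$, each affine piece is invertible, and a PWA bijection therefore has a PWA, hence continuous, inverse). Thus the whole task reduces to proving that $F_{\text{PWA}}$ is one-to-one and onto, which I would establish simultaneously by induction on the ambient dimension $n$ using a variable-elimination (Gaussian-elimination / Schur-complement) argument.

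For the base case $n = 1$, the hypothesis states that the scalar slope $(\bC_k)_{11} = \det(\bC_k[1])$ has the same nonzero sign $\sigma_1$ on every region $P_k$. Since there are finitely many regions, these slopes take finitely many values, all of sign $\sigma_1$, and are therefore bounded away from zero; a continuous PWA function of one variable with such slopes is strictly monotone and coercive, hence a homeomorphism of $\R$. For general $n$, I would single out the first coordinate. Writing $\bx = (x_1, \bar{\bx})$ with $\bar{\bx} = (x_2, \ldots, x_n)$ and freezing $\bar{\bx}$, the partial derivative of the first output $F_1$ with respect to $x_1$ on region $P_k$ is exactly $(\bC_k)_{11}$, again of fixed sign $\sigma_1$. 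Hence $x_1 \mapsto F_1(x_1, \bar{\bx})$ is, for every fixed $\bar{\bx}$, a strictly monotone coercive bijection of $\R$, so the equation $F_1(\bx) = y_1$ can be solved uniquely for $x_1 = \phi(y_1, \bar{\bx})$, where $\phi$ is continuous and PWA in $(y_1, \bar{\bx})$.

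Substituting $x_1 = \phi(y_1, \bar{\bx})$ into the remaining outputs yields, for each fixed $y_1$, a reduced continuous PWA map $G_{y_1}: \R^{n-1} \to \R^{n-1}$ sending $\bar{\bx}$ to $(y_2, \ldots, y_n)$, and $F_{\text{PWA}}$ is a bijection iff $G_{y_1}$ is a bijection for every $y_1$. The crucial point is that the Jacobian of $G_{y_1}$ on each region is precisely the Schur complement of the $(1,1)$ entry in $\bC_k$, and the Schur-complement (quotient) identity for leading principal minors gives $\det(\mathrm{Jac}\,G_{y_1}[s]) = \det(\bC_k[s+1]) / \det(\bC_k[1])$ for $s = 1, \ldots, n-1$. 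By hypothesis each such ratio equals $\sigma_{s+1}/\sigma_1$, which is nonzero and constant across all regions. Therefore $G_{y_1}$ satisfies exactly the same leading-principal-minor sign hypothesis in dimension $n-1$, the induction hypothesis shows it is a homeomorphism, and unwinding the elimination shows $F_{\text{PWA}}$ is a bijection, closing the induction.

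I expect the main obstacle to be the piecewise-affine bookkeeping in the elimination step: carefully verifying that solving $F_1 = y_1$ produces a genuinely continuous PWA function $\phi$, that the change of variables $(x_1, \bar{\bx}) \mapsto (y_1, \bar{\bx})$ transports the input tessellation to a valid tessellation on which $G_{y_1}$ is PWA, and that the Schur-complement minor identity holds region-by-region with the signs combining as claimed. This last point is exactly where the strong \emph{leading-principal-minor} form of the hypothesis is essential rather than the weaker condition that $\det(\bC_k)$ alone be coherently oriented, since coherent orientation by itself is known not to imply injectivity. Maintaining coercivity through each elimination stage, so that surjectivity (and not merely local invertibility) is preserved, will also require care.
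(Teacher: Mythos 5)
Your proposal is essentially correct, but note that there is no in-paper proof to compare it against: the paper imports this theorem from the literature, citing \cite{homeo2} with the conditions originating in \cite{homeo1}, and never proves it (the appendix proves only Theorem~\ref{thm: activation_flipping}). What you have reconstructed is in fact the classical elimination-and-induction argument that underlies the cited result. Your key steps hold up under scrutiny: since the paper's Definition~\ref{def:tessellation} makes the tessellation finite, the section $x_1 \mapsto F_1(x_1,\bar{\bx})$ crosses finitely many regions, so slopes of fixed sign $\sigma_1$ are bounded away from zero, giving strict monotonicity and coercivity; the auxiliary map $H(x_1,\bar{\bx}) = (F_1(\bx),\bar{\bx})$ is then a continuous PWA bijection of $\R^n$ whose pieces have Jacobian determinant $(\bC_k)_{11} \neq 0$, so invariance of domain makes it a homeomorphism with PWA inverse, which is exactly what you need for $\phi$ to be jointly continuous and PWA and for the transported tessellation on which $G_{y_1}$ is PWA over finitely many polyhedra of $\R^{n-1}$; and the quotient identity $\det(\bC_k[s+1]) = (\bC_k)_{11}\,\det\bigl((\bC_k/(\bC_k)_{11})[s]\bigr)$ is valid because the leading $s \times s$ block of the Schur complement of $(\bC_k)_{11}$ in $\bC_k$ coincides with the Schur complement of $(\bC_k)_{11}$ inside $\bC_k[s+1]$, so the reduced map inherits minor signs $\sigma_1\sigma_{s+1}$, constant in $k$ and independent of $y_1$, and the induction hypothesis applies in dimension $n-1$ for each fixed $y_1$. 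Two small remarks: your stated worry about ``maintaining coercivity through each elimination stage'' dissolves, since in the induction step surjectivity of $G_{y_1}$ comes for free from the inductive conclusion that it is a homeomorphism onto $\R^{n-1}$, leaving coercivity needed only in the one-dimensional base case; and you are right that the full leading-principal-minor hypothesis is what powers the recursion --- constancy of $\operatorname{sgn}\det(\bC_k)$ alone is known to be insufficient for injectivity of a PWA map --- which is precisely why the theorem demands the condition for every order $r = 1, \ldots, n$.
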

Note that all $\mathbf{C}_k$ nonsingular is a necessary but insufficient condition. 
This theorem gives us a simple way to check whether a ReLU network is a homeomorphism once we have obtained the explicitly defined PWA function. 

Thus, if the ReLU network defining the dynamical system is a homeomorphism, then restricting RPM to enumerate a connected backward reachable set of the seed ROA results in no loss of generality and the set obtained remains invariant.
In our examples we explore the effect of this accelerated procedure and show that it can lead to dramatic decreases in computation time.
The computation time saved is roughly the ratio between the number of polyhedra in the connected backward reachable set and the number of polyhedra over the entire state domain $\mathcal{X}$.
Depending on the definition of the state domain, this ratio can be quite extreme.

Our method of decomposing ReLU networks into their explicit PWA representations using RPM and checking the homeomorphism condition is, to the best of our knowledge, the first approach for determining the invertibility of an arbitrary ReLU network. 
This procedure for checking invertibility may find use beyond the problems studied in this paper, like in training normalizing flow models~\cite{papamakarios2021normalizing}.



\section{Examples} 
\label{Sec:Examples}
All examples are run on a desktop computer with AMD Ryzen 5 5600X 3.7 GHz 6-Core processor and 16GB of RAM.
Our implementation uses the GLPK open-source LP solver and the JuMP optimization package \cite{DunningHuchetteLubin2017}. 
The polyhedron coloring in plots is random.


Given that answering queries about exact reachable sets of ReLU neural networks is NP-complete \cite{reluplex}, the application of the methods presented in this paper is constrained by (i) the size of the neural network to analyze and (ii) the properties one wishes to verify.
When only a single-step reachability computation is needed, as in verifying open-loop safety properties of a learned policy, the ReLU networks can be moderately sized.
With multi-step reachability computations, such as in Examples \ref{sec:vanderpol}, \ref{Pendulum}, \ref{Taxinet}, the ReLU networks are smaller since the network is concatenated with itself for each desired reachability step.

\subsection{van der Pol Oscillator Example} \label{sec:vanderpol}
In this example we illustrate ROA computation for a learned reverse-time van der Pol oscillator.
These dynamics give rise to a bounded, nonconvex ROA (the boundary is the limit cycle). 
We train a ReLU network to learn a 10 Hz discrete-time model from the continuous-time model,
\begin{subequations}
\begin{align}
    \dot{x}_1 &= -x_2 \\
    \dot{x}_2 &= x_1 + x_2(x_1^2 - 1).
\end{align}
\end{subequations}

We then find a seed ROA for the stable affine system as described in Section \ref{seed}. Next, we verify the network is a homeomorphism by evaluating the conditions in Section \ref{sec:homeo}; from this we know that a connected set in the output space will have a connected preimage. 
\textcolor{black}{We repeat the network training and these preprocessing steps $10$ times and provide ranges for the number of affine regions and computation time for each step in Table \ref{tab:van_details}.
We run the longer, backward reachability computation, on only one of the networks (a network with 248 regions).}


\begingroup
\setlength{\tabcolsep}{5pt}
\begin{table}[h]
    \centering
    \caption{van der Pol Dynamics Analysis}
    {\color{black}\begin{tabular}{c | c}
        Layer Sizes & $2$, $20$, $20$, $2$ \\
        Number of Affine Regions & \textcolor{black}{$93 - 258$} Regions \\
        Time to Enumerate Affine Regions & \textcolor{black}{$1.93 - 3.05$} Seconds \\
        Time to Check if Homeomorphism & \textcolor{black}{$0.04 - 0.04$} Seconds \\
        Time to Find Fixed Point & \textcolor{black}{$0.27 - 0.33$} Seconds \\
        Time to Find Seed ROA & \textcolor{black}{$0.74 - 2.2$} Seconds
    \end{tabular}}
    \label{tab:van_details}
\end{table} 
\endgroup

Then, to find a $t$-step ROA, we simply perform backward reachability on the seed ROA for the desired number of steps.
We show the resulting 5, 10, 15, 20, 25, and 30-step ROAs in Figure \ref{fig:van_panel}. 
Note that in this example we compute a $t$-step ROA without needing intermediate ROAs by simply concatenating $t$ dynamics networks. 
However, we show multiple ROAs for illustration. By verifying the learned dynamics function is homeomorphic, we can run RPM on a restricted domain, providing a significant computational speedup. 
In Table \ref{tab:van_der_pol} we report the number of regions in the resulting ROAs as well as the number of regions explored by the backward reachability algorithm with and without leveraging the homeomorphic property of the network. 
The results in the table indicate a $15$x speedup is gained by leveraging the knowledge that the ROA will be connected since the seed ROA is connected and the network is homeomorphic.

\textcolor{black}{In addition, Table \ref{tab:van_der_pol} demonstrates that using RPM with knowledge of the homeomorphism also provides significant performance gains over a state of the art exact reachability algorithm \cite{yang2020reachability}.
Although the approach from \cite{yang2020reachability} is clearly faster than the ``naive" RPM implementation, this is not the case when using the homeomorphic implementation.
Moreover, the state bounds were chosen with prior knowledge of the true ROA, and if we relax the state bounds to $x_1 \in [-10, 10]$, $x_2 \in [-10, 10]$, then the method from \cite{yang2020reachability} enumerates the 15-step ROA in 610 seconds (scaling with the size of the state space), whereas the homeomorphic RPM implementation does not scale with the size of the state space (but rather with the size of the ROA), and thus the solve time remains 157 seconds.
}

\begingroup
\setlength{\tabcolsep}{2pt}
\begin{table}[h]
    \centering
    \caption{van der Pol ROA}
    {\fontsize{8}{10}\selectfont 
    \begin{tabular}{c c c c c c c}
        & &\multicolumn{2}{c}{\textbf{Homeomorphic}}&\multicolumn{3}{c}{\textbf{Naive}} \\
        \cmidrule(r){3-4}  \cmidrule(r){5-7}
        Steps& \# in ROA &\# Explored&Time (s) &\# Explored&Time-RPM&\textcolor{black}{Time-\cite{yang2020reachability}}\\ [0.5ex] 
        \hline \\ [-1.5ex]
        5    &$500$&$594$&$5$&$7362$&$98$&\textcolor{black}{8} \\
        10   &$2525$&$2735$&$42$&$33,946$&$627$&\textcolor{black}{127} \\
        15   &$6569$&$6898$&$157$&$85,217$&$2280$&\textcolor{black}{295} \\
        20   &$12,553$&$12,995$&$413$&$-$&$-$&$-$ \\
        25   &$20,949$&$21,554$&$954$&$-$&$-$&$-$ \\
        30   &$32,753$&$33,500$&$1956$&$-$&$-$&$-$
    \end{tabular}}
    \label{tab:van_der_pol}
\end{table} 
\endgroup

In contrast, the standard approach for finding an ROA via a Lyapunov function (see \cite{biswas}) does not return a solution. The Lyapunov approach first finds an invariant set (that is as large as possible). Then, uses convex optimization to synthesize a piecewise Lyapunov function (if one exists). We use MPT3 to carry out this approach. MPT3 first finds an invariant set by computing the following recursion until convergence,
\begin{subequations}
\begin{align}
    X_0 &= \mathcal{X} \\
    X_{i+1} &= \{\bx \mid f(\bx) \in X_{i} \}
\end{align}
\end{subequations}
for a compact polyhedral domain $\mathcal{X}$, which in this case is the rectangle $x_1 \in [-2.5, 2.5]$, $x_2 \in [-3, 3]$. After running this recursion in MPT3 for 200 steps (9 hours), an invariant set was not found and we terminated the procedure. Our backward reachability approach, leveraging the homeomorphic property of the dynamics, is much more reliable for finding ROAs. Furthermore, our approach allows the user to trade-off between computation time and size of the ROA, whereas the standard Lyapunov approach is ``all or nothing".

\begin{figure}[ht]
\includegraphics[width=0.98\linewidth]{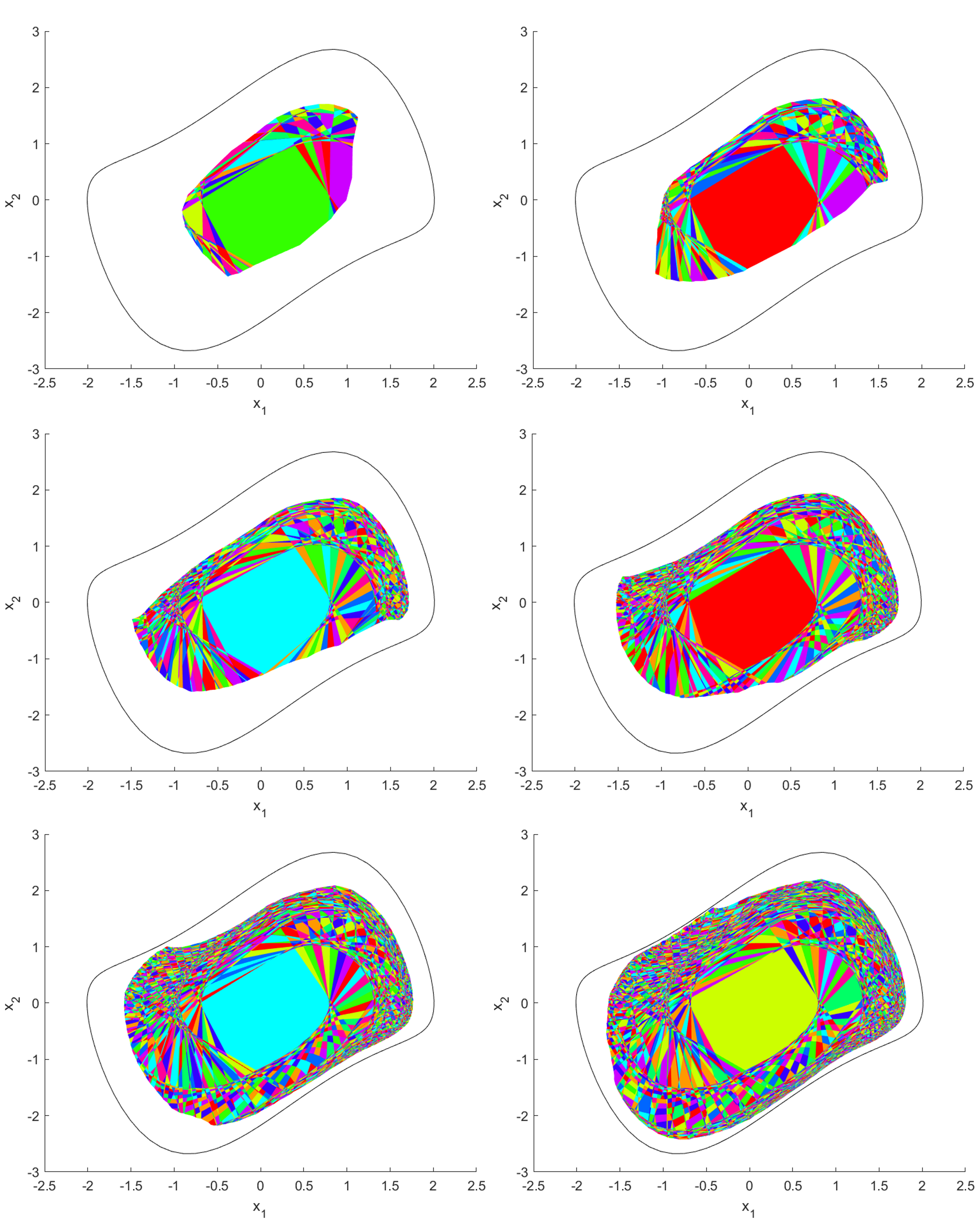} 
\setlength\belowcaptionskip{-1.\baselineskip}
\centering
\caption{5, 10, 15, 20, 25, and 30-step regions of attraction (ROAs) for the learned van der Pol system. The learned dynamics network is continuously invertible, implying that preimages of connected sets are connected. This property restricts the space explored with RPM, resulting in a 15x speedup in ROA computation. The black line represents the boundary of the true continuous-time maximal ROA.}
\label{fig:van_panel}
\end{figure}

\subsection{Pendulum Example} \label{Pendulum}
In this example we pair our RPM tool with the popular multi-parametric toolbox (MPT3) \cite{MPT3} to find a control invariant set for a learned pendulum model. We first learn a ReLU network that models the continuous-time dynamics
\begin{align}
    \Ddot{\theta} = \frac{u + mgl\sin{\theta}}{ml^2}
\end{align}
as a discrete-time model where $m = 1$kg, $g = 9.81$m/s$^{2}$, $l=1$m, and $\theta$ is measured clockwise from the upright position. 
\textcolor{black}{We repeat the network training and enumeration of affine regions $10$ times and provide ranges for the number of affine regions and computation time in Table \ref{tab:pend_details}.
We run the longer, invariant set computation, on only one of the networks (a network with 217 regions).}

Details of the learned dynamics network are in Table \ref{tab:pend_details}.


\begingroup
\setlength{\tabcolsep}{5pt}
\begin{table}[h]
    \centering
    \caption{Pendulum Dynamics Analysis}
    {\color{black}\begin{tabular}{c | c}
        Layer Sizes & $3$, $16$, $16$, $2$ \\
        Number of Affine Regions & \textcolor{black}{$217 - 823$} Regions \\
        Time to Enumerate Affine Regions & \textcolor{black}{$2.35 - 4.11$} Seconds
    \end{tabular}}
    \label{tab:pend_details}
\end{table} 
\endgroup

We then use MPT3 to compute the control invariant set shown in Figure \ref{fig:ctrl_inv} with computational details in Table \ref{tab:pend_set}.

\begingroup
\setlength{\tabcolsep}{5pt}
\begin{table}[h]
    \centering
    \caption{Pendulum Control Invariant Set}
    {\color{black}\begin{tabular}{c c c}
        Algorithm Steps&\# Polyhedra in Set&Time\\
        \hline \\ [-1.5ex]
        $61$ & $958$ & $5.4$ hours
    \end{tabular}}
    \label{tab:pend_set}
\end{table} 
\endgroup

As shown in Figure \ref{fig:ctrl_inv}, the control invariant set is the union of 3 disjoint sets. The leftmost set represents the states for which there is enough control authority to keep the pendulum clockwise of the swing-down position, but not enough to swing back up to upright. The middle set represents the states for which there is enough control authority to keep the pendulum from swinging all the way down. Finally, rightmost set has a similar interpretation to the leftmost set, being the states for which there is enough control authority to keep the pendulum counterclockwise of the swing-down position, but not enough to swing back upright. With the explicit PWA representation given by RPM, the wide array of PWA analysis tools offered by the MPT3 toolbox is able to be leveraged for learned dynamics and controls models.
\begin{figure}[h]
\includegraphics[width=0.85\columnwidth]{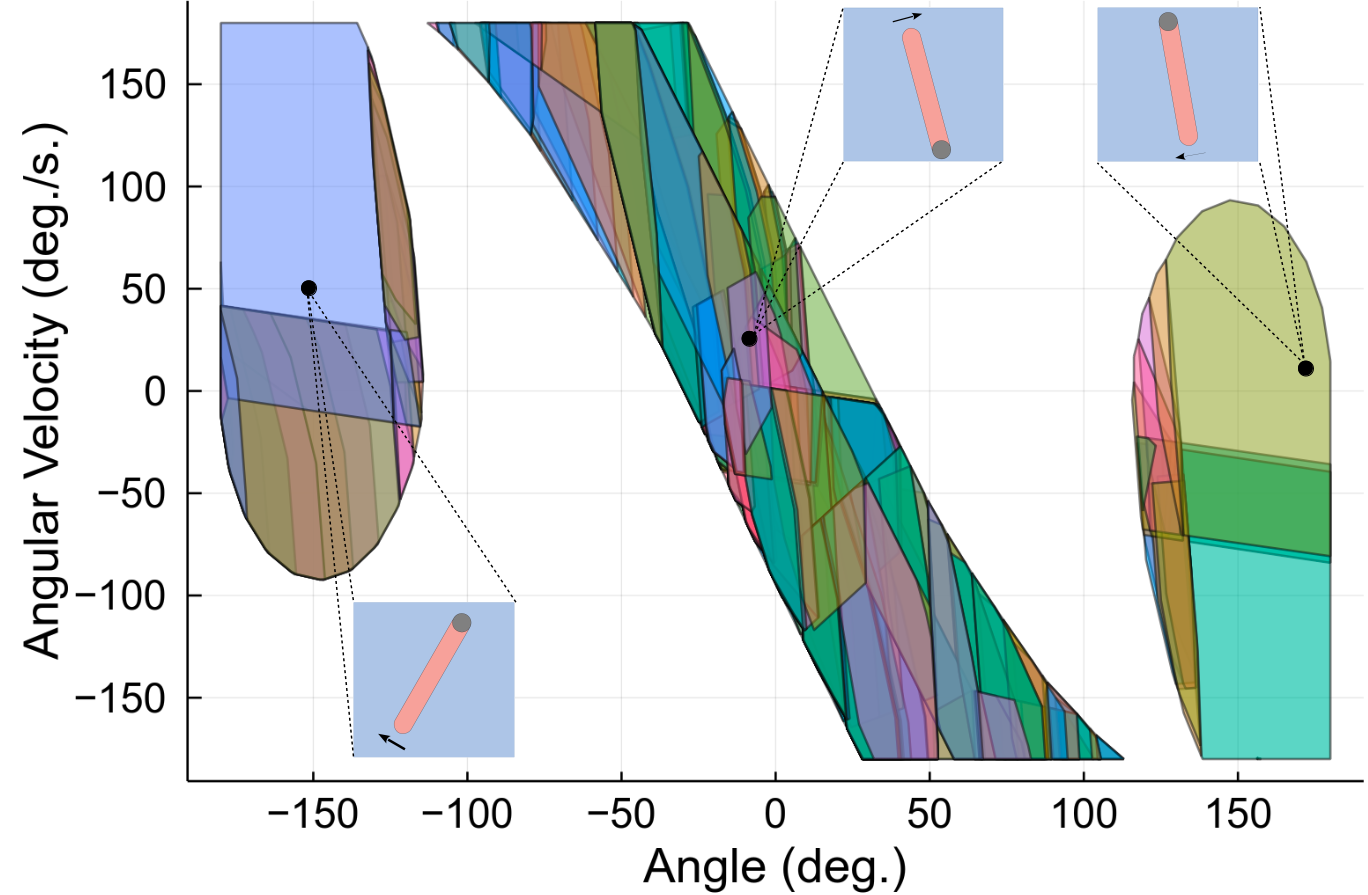} 
\centering
\caption{A control invariant set for a learned, torque controlled pendulum system. The domain of the dynamics is $\theta \in (-180\degree, 180\degree)$, $\dot{\theta} \in [-180 \degree/\text{s}, 180 \degree/\text{s}]$, $u \in [-5, 5]$ N-m,  where $0\degree$ is upright. Note that we restrict the pendulum so it is not allowed to pass through the down position.  When RPM is paired with MPT3 we are able to compute complicated control invariant sets of learned systems, like this unconnected set, as opposed to convex approximations. Within each connected component a sample state is illustrated to help visualize the possible pendulum configurations.}
\label{fig:ctrl_inv}
\end{figure}

\subsection{Verifying Image-Based Control} \label{Taxinet}
\begin{figure}
    \setlength\abovecaptionskip{1.2\baselineskip}
    \setlength\belowcaptionskip{-0.8\baselineskip}
     \centering
     \begin{subfigure}[b]{0.48\columnwidth}
         \centering
         \includegraphics[width=\textwidth]{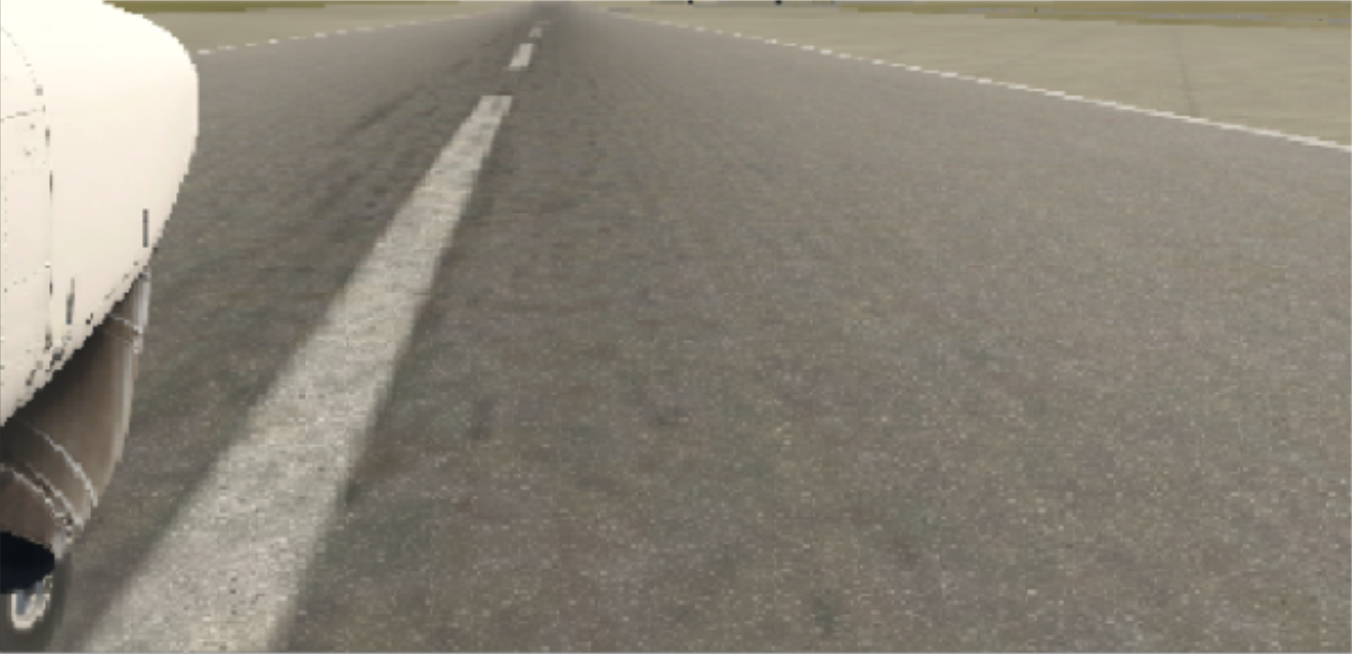}
         \caption{Full observation.}
         \label{fig:taxinet_full}
     \end{subfigure}
     \hfill
     \begin{subfigure}[b]{0.48\columnwidth}
         \centering
         \includegraphics[width=\textwidth]{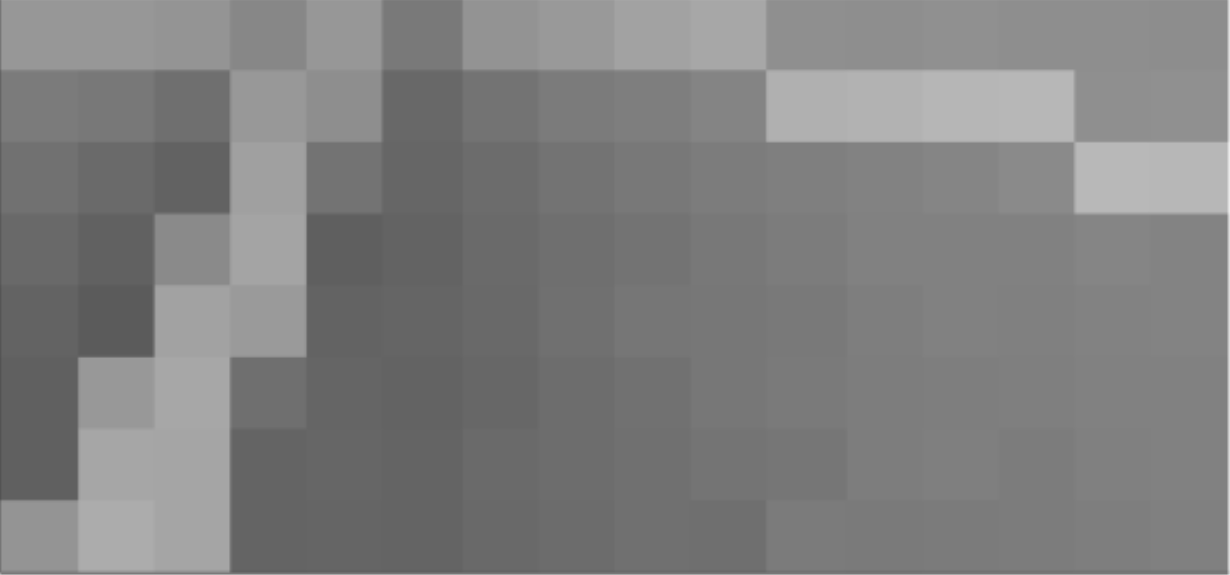}
         \caption{Downsampled observation.}
         \label{fig:taxinet_coarse.}
     \end{subfigure}
        \caption{The X-Plane 11 flight simulator is used to generate realistic image observations (left) that are then downsampled (right). The downsampled images are used to learn an image-based controller, as well as to learn an observation model from state to image observation. Putting these two components together along with a learned dynamics network results in closed-loop system dynamics represented as a single ReLU network. These images and learned controller/observation models are from \cite{verify_gan}.}
        \label{fig:taxinet_obs}
\end{figure}

lIn this example we apply the tools of Section \ref{Sec:ROA} to a larger dynamical system that models the closed-loop behavior of a taxiing airplane controlled from camera images.
Our goal is to investigate the set of states that are stabilized by a learned image-based controller by modeling the closed-loop dynamical system with a single ReLU network, and applying the accelerated procedure from Section \ref{Sec:ROA}. 

The specific system we consider was introduced in \cite{verify_gan} where a taxiing airplane is regulated to the centerline of a runway via rudder-angle control input based on images from an onboard camera. Because of the image observation, we cannot model the observation from physical principles. Instead, we modify the learned models in \cite{verify_gan} to obtain a network that approximates the observed image given a state
\begin{align}
    \mathbf{o}_t = F_{obs}(\bx_t),
\end{align}
and a control network that produces a rudder angle command given an observed image
\begin{align}
    u = F_{ctrl}(\mathbf{o}_t).
\end{align}
The continuous time dynamics are
\begin{subequations}
\begin{align}
    \dot{p} &= v\sin{\theta} \\
    \dot{\theta} &= \frac{v}{L}\tan{u}
\end{align}
\end{subequations}
for crosstrack position $p$, heading error $\theta$, rudder angle $u$, and speed $v=5$ m/s. We learn a 1Hz discrete-time model,
\begin{align}
    \bx_{t+1} = F_{dyn}(\bx_t, u),
\end{align}
and then have the closed-loop system
\begin{align}
    \bx_{t+1} = F_{cl}(\bx_t) = F_{dyn}(\bx_t, F_{ctrl} \circ F_{obs}(\bx_t))
\end{align}
which can be represented as one ReLU network.

Next, we use Algorithm \ref{Algo:cell_enumeration} to retrieve the explicit PWA function for $F_{cl}$ over the domain $p \in [-5, 5]$ m., $\theta \in [-15^\circ, 15^\circ]$. 
The resulting PWA function has 244,920 regions, took 3 hours to compute, and was not found to be a homeomorphism.
Timing for these steps is summarized in Table \ref{tab:taxinet_details}.
Although the function is not homeomorphic, when applying the accelerated algorithm from Section \ref{sec:homeo}, we are still guaranteed that all states within this set asymptotically converge to the stable fixed point.


\begingroup
\setlength{\tabcolsep}{5pt}
\begin{table}[h]
    \centering
    \caption{Image-Based Control Dynamics Analysis}
    {\color{black}\begin{tabular}{c | c}
        Layer Sizes & \begin{tabular}{@{}c@{}}$2$, $260$, $260$, $260$, $260$, \\ $20$, $12$, $12$, $16$, $2$\end{tabular} 
          \\
        Number of Affine Regions & $244,920$ Regions \\
        Time to Enumerate Affine Regions & $3$ Hours \\
        Time to Check if Homeomorphism & $0.04$ Seconds \\
        Time to Find Fixed Point & $2$ Seconds \\
        Time to Find Seed ROA & $3$ Seconds
    \end{tabular}}
    \label{tab:taxinet_details}
\end{table} 
\endgroup

This PWA function is significantly larger than other PWA dynamics that have been studied in the literature ($\sim 250,000$ regions vs $\sim 1,000$ regions). 
After computing the explicit PWA function, we find a stable fixed point. Next, we find a seed ROA for each stable fixed point and perform backward reachability to grow the ROAs. During the backward reachability computation, we use a few strategies to make the problem more tractable. First, we restrict the backward reachability algorithm to only explore connected backward reachable sets as described in Section \ref{sec:homeo}. 
Second, to avoid numerical concerns accompanied with very small polyhedra, we compute backward reachable sets iteratively with the 1-step PWA dynamics directly instead of in one shot using concatenated ReLU networks. 
Third, at each step we merge the polyhedra into a union of overlapping polyhedra to reduce the complexity of the set representation.

In Figure \ref{fig:taxi_roa} we show the set of states that our algorithm was able to show is asymptotically stabilized by the learned controller.
This set was computed using $11$ backward reachable steps (11 seconds of trajectory evolution), and is represented as a union of $28,715$ polyhedra.

\begingroup
\setlength{\tabcolsep}{5pt}
\begin{table}[h]
    \centering
    \caption{Image-Based Control ROA}
    {\color{black}\begin{tabular}{c c c}
        Algorithm Steps&\# Polyhedra in Set&Time\\
        \hline \\ [-1.5ex]
        $11$ & $28,715$ & $14$ hours
    \end{tabular}}
    \label{tab:taxinet_roa}
\end{table} 
\endgroup

\begin{figure}
\includegraphics[width = 0.85\linewidth]{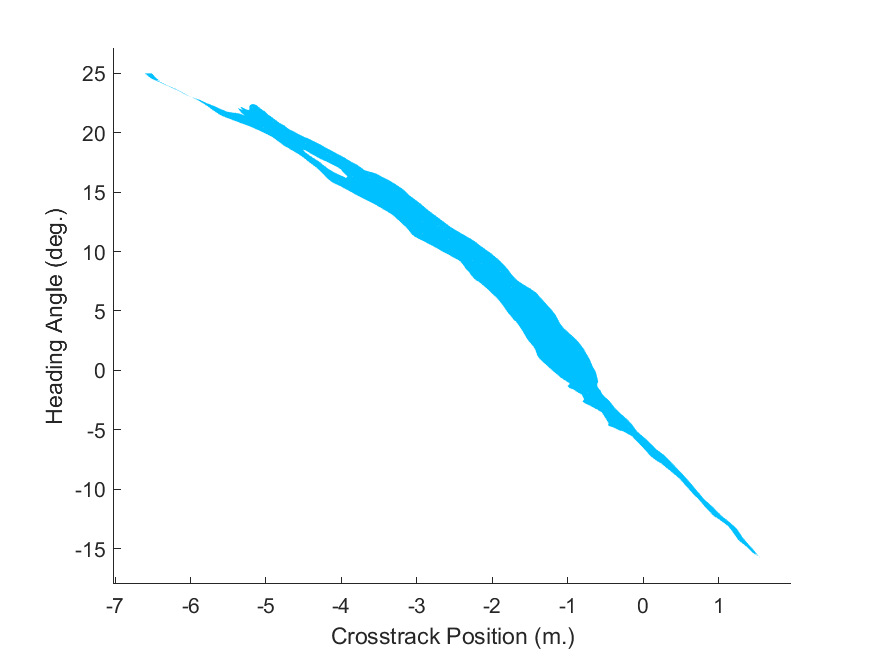} 
\setlength\belowcaptionskip{-1.\baselineskip}
\centering
\caption{Set of states that our approach finds are stabilized by the learned controller for the image-based airplane runway taxiing system from Fig.~\ref{fig:taxinet_obs}.
This set is the 11-step backward reachable set for the seed ROA found around the stable fixed point.
This set of states is represented by 28,715 polyhedra.
}
\label{fig:taxi_roa}
\end{figure}

\section{Conclusions}
\label{Sec:Conclusions}
We proposed the Reachable Polyhedral Marching (RPM) algorithm to incrementally construct an exact PWA representation of a ReLU network. 
RPM computes an explicit PWA function representation for a given ReLU network that can then be used to quickly find forward and backward reachable sets of ReLU networks. 
In addition, we demonstrated how, for dynamical systems modeled as ReLU networks, RPM can be leveraged to (i) compute intricate control invariant sets and (ii) certify stability and compute ROAs. 
RPM enumerates affine regions and reachable sets in an incremental and connected fashion, which we show can lead to massive acceleration when computing ROAs.
Lastly, RPM also provides a novel way to determine invertibility of ReLU networks.

In future work we are interested in investigating the homeomorphic properties of ReLU networks. 
There are many studies on the invertibility of neural networks that propose to restrict the network architecture and training such that invertibility is guaranteed with high probability. 
However, these restrictions might be loosened by further understanding the connection between PWA homeomorphisms and ReLU networks. Another area for future work is to use the insights gained from Theorem \ref{thm: activation_flipping} to develop an inverse RPM method for encoding explicit PWA functions as ReLU networks. 
Existing work in this direction yields ReLU networks with many more parameters than necessary, not taking full advantage of the dependency of PWA parameters. Also of interest is the potential to use the PWA decomposition given by RPM to assess where in the domain the ReLU network is likely to generalize well and where it is not.


\section*{Acknowledgements}
The authors would like to thank Yue Meng for helpful insights and feedback on the RPM algorithm.





\bibliographystyle{./IEEEtran} 
\bibliography{./IEEEabrv,./IEEEexample}

\appendix 
\label{Sec:Appendix}

\subsection{Proofs}
In this section we provide a proof for Theorem \ref{thm: activation_flipping}. 
Similar arguments are used to characterize the change in Jacobians of PWA components in \cite{pwa_comp} and the references therein.
\textcolor{black}{We begin with a proof for the first claim of Theorem \ref{thm: activation_flipping} with Lemma \ref{lemma:stationary}.
We then prove the second claim of Theorem \ref{thm: activation_flipping} with Lemma \ref{lemma:flips}, with additional support from Lemma \ref{lemma:affine}.}



\subsubsection{Proving Claim 1 of Theorem \ref{thm: activation_flipping}}
\begin{lemma}[Stationary Neurons] \label{lemma:stationary}
If $[\ba_{k,ij}^\top\quad  -b_{k,ij}] \neq \alpha[\ba_\eta^\top\quad -b_\eta]$ for some $\alpha \in \{0,1\}$, then the activation of neuron $ij$ does not change, $\lambda_{k,ij} = \lambda_{k',ij}$.
\end{lemma}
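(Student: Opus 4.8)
The plan is to prove the contrapositive: assuming neuron $ij$ changes activation across the shared face, $\lambda_{k,ij} \neq \lambda_{k',ij}$, I will show that its hyperplane in $P_k$ must satisfy $[\ba_{k,ij}^\top \ {-}b_{k,ij}] = \alpha[\ba_\eta^\top \ {-}b_\eta]$ for some $\alpha \in \{0,1\}$, which is exactly the negation of the hypothesis. The central object is the preactivation map $z_{ij}^{\text{pre}}(\cdot)$ of (\ref{Eq:PreActExplicit}), which is a globally continuous function of $\bx$, being a finite composition of affine maps and ReLU nonlinearities. On $\interior(P_k)$ it is given by the $P_k$-representation $z_{ij}^{\text{pre}}(\bx) = \bar{\ba}_{k,ij}^\top \bx - \bar{b}_{k,ij}$, and analogously on $\interior(P_{k'})$; by continuity each affine expression extends to the closure, so both must agree on the shared face $F := P_k \cap P_{k'}$.

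First I would show that $z_{ij}^{\text{pre}} \equiv 0$ on $F$. Since $P_k$ and $P_{k'}$ are full-dimensional regions of the tessellation, the activation convention (\ref{Eq:AP}) gives, on $\interior(P_k)$, $z_{ij}^{\text{pre}} > 0$ when $\lambda_{k,ij}=1$ and $z_{ij}^{\text{pre}} \le 0$ when $\lambda_{k,ij}=0$, and similarly on $\interior(P_{k'})$. A change of activation therefore forces $z_{ij}^{\text{pre}}$ to be $\ge 0$ on one side of $F$ and $\le 0$ on the other; passing to the closures and restricting to $F$ pins the value to exactly $0$ throughout $F$. Using the $P_k$-representation this reads $\bar{\ba}_{k,ij}^\top \bx = \bar{b}_{k,ij}$ for all $\bx \in F$. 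Because $F$ is a facet contained in $\{\bx \mid \ba_\eta^\top \bx = b_\eta\}$ and $|F|_{n-1} > 0$, its affine hull is exactly that hyperplane, and an affine form vanishing on a whole hyperplane is a scalar multiple of its defining form; hence $[\bar{\ba}_{k,ij}^\top \ {-}\bar{b}_{k,ij}]$ is either the zero vector or a nonzero multiple of $[\ba_\eta^\top \ {-}b_\eta]$.

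It remains to normalize and fix the sign. If the vector is zero we are in the $\alpha = 0$ case. Otherwise, after dividing by $\|\bar{\ba}_{k,ij}\|$ as in (\ref{eq:normalize}), $[\ba_{k,ij}^\top \ {-}b_{k,ij}] = \pm[\ba_\eta^\top \ {-}b_\eta]$; the minus sign is impossible because both $\ba_{k,ij}^\top \bx \le b_{k,ij}$ and $\ba_\eta^\top \bx \le b_\eta$ are valid constraints of $P_k$, and opposite normals would confine $P_k$ to the hyperplane itself, contradicting its full-dimensionality. This yields $\alpha = 1$, completing the contrapositive.

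I expect the main obstacle to be the first step: carefully justifying that the preactivation vanishes on the \emph{entire} facet $F$ rather than merely at an isolated point. This requires combining the global continuity of $z_{ij}^{\text{pre}}$ with the strict-versus-nonstrict sign behavior dictated by the activation convention on the open regions, and then invoking that the neighbor constraint is essential (so $F$ is genuinely $(n-1)$-dimensional, $|F|_{n-1}>0$) to identify $\aff(F)$ with the separating hyperplane. The sign bookkeeping in the final normalization is routine but must be handled using full-dimensionality of $P_k$ to exclude the reversed-orientation case.
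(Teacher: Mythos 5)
Your proof is correct, but it takes a genuinely different route from the paper's. The paper proves the lemma \emph{directly}: after first excluding $\alpha=-1$ via full-dimensionality of $P_k$, it argues that non-proportionality forces the constraint to be strictly slack on the shared face, i.e.\ $\ba_{k,ij}^\top \bx < b_{k,ij}$ for all $\bx \in \interior(P_k \cap P_{k'})$ --- equality on the whole face would give proportionality, while equality on a proper subset would mean the shared face is identified by two distinct hyperplanes, both contradictions --- and then uses continuity of $\bx \mapsto \ba_{k,ij}^\top(\bx)\bx - b_{k,ij}(\bx)$ to perturb a face point into $\interior(P_{k'})$ with the strict inequality intact, so the constraint, and hence the activation, persists. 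You instead prove the contrapositive: an activation flip sandwiches the globally continuous preactivation between $\le 0$ on one closed region and $\ge 0$ on the other, pinning $z_{ij}^{\text{pre}} \equiv 0$ on the facet $F$; since $|F|_{n-1} > 0$ and $F \subseteq \{\bx \mid \ba_\eta^\top \bx = b_\eta\}$, the affine hull of $F$ is that hyperplane, an affine form vanishing on it must be a scalar multiple of $[\ba_\eta^\top \ \ {-b_\eta}]$, and the $-1$ sign is excluded exactly as in the paper. Notably, your vanishing-on-the-facet plus one-dimensional-annihilator argument is precisely the machinery the paper reserves for the companion lemma (Effects of Activation Flips), where it is made rigorous via affinely independent points and rank--nullity; so your route unifies both lemmas under a single mechanism (and sidesteps the paper's somewhat delicate ``equality on a strict subset'' case analysis), at the cost of needing the spanning/annihilator argument here, whereas the paper's direct proof keeps this lemma more elementary, requiring only a strictness argument and one continuity perturbation. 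Your flagged obstacle --- justifying vanishing on the \emph{entire} facet rather than a point --- is handled correctly by your closed-halfspace sandwich, so no gap remains.
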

\begin{proof}
\textcolor{black}{We prove this lemma by showing that, under the conditions of the lemma, the constraint induced by the neuron under its activation in $P_k$ remains valid in $P_{k'}$.}

If $[\ba_{k,ij}^\top\quad  -b_{k,ij}] \neq \alpha[\ba_\eta^\top\quad -b_\eta]$ for some $\alpha \in \{0,1\}$, then we know that the constraint $\ba_{k,ij}^\top \bx \le  b_{k,ij}$ is not active on the boundary between $P_k$ and $P_{k'}$. Formally, $\forall \bx \in \interior(P_k \cap P_{k'})$
\begin{align}
    \ba_{k,ij}^{\top}(\bx) \bx < b_{k,ij}(\bx). \label{eq:App_T3}
\end{align}
The inequality must be strict because if the relationship held with equality $\forall \bx \in \interior(P_k \cap P_{k'})$ then we would have $[\ba_{k,ij}^\top\quad  -b_{k,ij}] = \alpha[\ba_\eta^\top\quad -b_\eta]$ for some $\alpha \in \{0,1\}$ (by definition of the neighbor constraint), leading to a contradiction.

Furthermore, if equality held on a strict subset of $\interior(P_k \cap P_{k'})$, then the shared face of $P_k$ and $P_{k'}$ would not be identified by $\ba_\eta^\top \bx = b_\eta$, but instead by both $\ba_\eta^\top \bx = b_\eta$ and  $\ba_{k,ij}^\top \bx = b_{k,ij}$, another contradiction. 

We now know the constraint is not active due to the strict inequality in (\ref{eq:App_T3}).
In addition, note that $\ba_{k,ij}^{\top}(\bx) \bx - b_{k,ij}(\bx)$ is a continuous function of $\bx$. 
From these facts we know $\exists \bm{\delta}$ such that $\bx + \bm{\delta} \in \interior(P_{k'})$ and the inequality remains strict,
\begin{align}
    \ba_{k,ij}^{\top}(\bx + \bm{\delta}) (\bx + \bm{\delta}) < b_{k,ij}(\bx + \bm{\delta}). \label{eq:App_T12}
\end{align}
Therefore, since the constraint associated with such a neuron $ij$ is valid on the interior of both $P_k$ and $P_{k'}$, the neuron activation does not change.

To finish, we note that the case of $\alpha = -1$ is impossible because it would imply that the dimension of $\text{int}(P_k)$ is less than the ambient dimension (contradicting Definition \ref{def:tessellation}).
\end{proof}

\subsubsection{Proving Claim 2 of Theorem \ref{thm: activation_flipping}}
\begin{lemma}[Effects of Activation Flips] \label{lemma:flips}
If \newline $[\ba_{k,ij}^\top\quad  -b_{k,ij}] = \alpha[\ba_\eta^\top\quad -b_\eta]$ for some $\alpha \in \{0,1\}$, then $[\ba_{k',ij}^\top\quad  -b_{k',ij}] = \alpha'[\ba_\eta^\top\quad -b_\eta]$ for some $\alpha' \in \{-1,0\}$.
\end{lemma}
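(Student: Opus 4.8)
The plan is to work with the \emph{unnormalized} halfspace data $[\bar\ba_{k,ij}^\top\quad -\bar b_{k,ij}]$ from (\ref{Eq:HalfspaceDef}) and to exploit the global continuity of the preactivation map $z_{ij}^{\text{pre}}(\bx)$. The central observation is that, restricted to a cell with a fixed activation pattern $\lambda_k$, the activation matrices $\bLambda_{(l)}$ are constant, so $z_{ij}^{\text{pre}}$ coincides there with the single affine function $\bar\ba_{k,ij}^\top \bx - \bar b_{k,ij}$; meanwhile $z_{ij}^{\text{pre}}$, being a composition of affine maps and continuous ReLU activations, is continuous on all of $\R^n$. First I would argue that the two affine pieces agree on the shared face: the restrictions of the continuous function $z_{ij}^{\text{pre}}$ to the closed cells $P_k$ and $P_{k'}$ must coincide on $P_k \cap P_{k'}$. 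Since $P_k$ and $P_{k'}$ are neighbors, $P_k \cap P_{k'}$ has nonzero $(n-1)$-volume and hence affinely spans the separating hyperplane $\ba_\eta^\top \bx = b_\eta$.

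The next step is a parallelism argument. Two affine functions that agree on a full $(n-1)$-dimensional subset of $\{\bx \mid \ba_\eta^\top \bx = b_\eta\}$ differ by a scalar multiple of $\ba_\eta^\top \bx - b_\eta$; matching coefficients yields
\begin{align}
[\bar\ba_{k,ij}^\top\quad -\bar b_{k,ij}] = [\bar\ba_{k',ij}^\top\quad -\bar b_{k',ij}] + \gamma\,[\ba_\eta^\top\quad -b_\eta] \label{eq:plan_star}
\end{align}
for some $\gamma \in \R$. I would then bring in the hypothesis. Since $[\ba_{k,ij}^\top\quad -b_{k,ij}] = \alpha[\ba_\eta^\top\quad -b_\eta]$ with $\alpha \in \{0,1\}$, and the normalization (\ref{eq:normalize}) merely rescales by the nonzero factor $(1-2\lambda_{k,ij})/\|\bar\ba_{k,ij}\|$, the unnormalized datum $[\bar\ba_{k,ij}^\top\quad -\bar b_{k,ij}]$ is itself a scalar multiple of $[\ba_\eta^\top\quad -b_\eta]$ (the zero multiple when $\alpha = 0$, using the degenerate-case convention $\bar\ba_{k,ij} = \mathbf{0}$, $\bar b_{k,ij} = 0$). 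Substituting into (\ref{eq:plan_star}) shows $[\bar\ba_{k',ij}^\top\quad -\bar b_{k',ij}]$ is also a scalar multiple of $[\ba_\eta^\top\quad -b_\eta]$.

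Finally I would push this back through the normalization. A vector parallel to the unit vector $\ba_\eta$ normalizes to $\pm\ba_\eta$, or to $\mathbf{0}$ in the degenerate case, so $[\ba_{k',ij}^\top\quad -b_{k',ij}] = \alpha'[\ba_\eta^\top\quad -b_\eta]$ with $\alpha' \in \{-1,0,1\}$. To exclude $\alpha' = 1$ I would invoke orientation: by the setup $P_{k'} \subset \{\bx \mid \ba_\eta^\top \bx \ge b_\eta\}$ with interior strictly on the $>$ side, whereas $\alpha' = 1$ would make $\ba_\eta^\top \bx \le b_\eta$ a valid constraint throughout $P_{k'}$, a contradiction. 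Hence $\alpha' \in \{-1,0\}$, as claimed.

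I expect the main obstacle to be making the agreement argument behind (\ref{eq:plan_star}) fully rigorous: confirming that the restrictions of the continuous function $z_{ij}^{\text{pre}}$ to the two closed cells genuinely coincide on the entire shared face, and that this face is full-dimensional within the separating hyperplane so that the two affine pieces are forced to be parallel rather than merely equal at isolated points. The degenerate case $\bar\ba = \mathbf{0}$ also warrants care in the normalization step, since the constraint is then trivial and one must track $\bar b$ separately to land in the correct case of the conclusion.
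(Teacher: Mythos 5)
Your proposal is correct and takes essentially the same route as the paper's proof: continuity of the preactivation forces the two affine pieces to agree on the shared face, the face's nonzero $(n-1)$-volume forces the $k'$-side functional to be parallel to $[\ba_\eta^\top \quad -b_\eta]$, and $\alpha' = 1$ is excluded because it would flatten $P_{k'}$ below the ambient dimension. The only difference is presentational: where you invoke the standard fact that an affine function vanishing on an affinely spanning subset of a hyperplane is a multiple of its defining functional, the paper proves this step explicitly by constructing $n$ points whose homogeneous liftings are linearly independent (splitting into the cases $b_\eta = 0$ and $b_\eta \neq 0$, via its affine-to-linear independence lemma) and applying rank--nullity to conclude the null space is one-dimensional.
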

\begin{proof}
\textcolor{black}{We prove this lemma by first showing that a weaker condition (dependent on $\bx$) holds for all points on the boundary between the regions.
Then, by analyzing this weaker condition for $n$ carefully chosen points on the boundary, we show that the dependence on $\bx$ can be removed and we arrive at the desired result.}

Consider the affine functions defined by neuron $ij$ in regions $P_k$ and $P_{k'}$,
\begin{subequations}
\begin{align}
    \Bar{\ba}_{k,ij}^\top \bx -\Bar{b}_{k,ij} \quad &\forall \bx \in P_k \\
    \Bar{\ba}_{k',ij}^\top \bx -\Bar{b}_{k',ij} \quad &\forall \bx \in P_{k'}
\end{align}
\end{subequations}
(note that the overbar accent indicates the parameters are unnormalized as in (\ref{Eq:HalfspaceDef}). Since ReLU networks are continuous functions, $\forall \bx \in P_k \cap P_{k'}$ we know
\begin{subequations}
\begin{align}
    \Bar{\ba}_{k,ij}^\top \bx -\Bar{b}_{k,ij} &= \Bar{\ba}_{k',ij}^\top \bx -\Bar{b}_{k',ij} \\
    \implies \begin{bmatrix} \multicolumn{2}{c}{\Bar{\ba}_{k',ij}^\top} & -\Bar{b}_{k',ij} \end{bmatrix} \begin{bmatrix} \bx \\ 1 \end{bmatrix} &= 
    \begin{bmatrix} \multicolumn{2}{c}{\Bar{\ba}_{k,ij}^\top} & -\Bar{b}_{k,ij} \end{bmatrix} \begin{bmatrix} \bx \\ 1 \end{bmatrix} \\
    \implies \begin{bmatrix} \multicolumn{2}{c}{\Bar{\ba}_{k',ij}^\top} & -\Bar{b}_{k',ij} \end{bmatrix} \begin{bmatrix} \bx \\ 1 \end{bmatrix} &= 
    \beta \begin{bmatrix}  \multicolumn{2}{c}{\ba_{\eta}^\top} & -b_{\eta} \end{bmatrix} \begin{bmatrix} \bx \\ 1 \end{bmatrix}
\end{align}
\end{subequations}
for some $\beta \ge 0$. Thus, we know the normalized constraint $\ba_{k',ij}^\top \bx \le b_{k',ij}$ must satisfy
\begin{align}
    \begin{bmatrix} \multicolumn{2}{c}{\ba_{k',ij}^\top} & -b_{k',ij} \end{bmatrix}  \begin{bmatrix} \bx \\ 1 \end{bmatrix} &=  
    \alpha' \begin{bmatrix} \ba^\top_\eta & -b_\eta \end{bmatrix} 
    \begin{bmatrix} \bx \\ 1 \end{bmatrix} \label{eq:x_depend}
\end{align}
for some $\alpha' \in \{-1,0,1\}$ and $\forall \bx \in P_k \cap P_{k'}$. We next seek to remove this relationship's dependence on $\bx$.

We know $P_k \cap P_{k'}$ is a subset of the $n-1$-dimensional affine span $\{\bx \mid \ba_\eta^\top \bx = b_\eta \}$. Since $\vert P_k \cap P_{k'} \vert_{n-1} \neq 0$ there exist $\bx_1, \ldots, \bx_{n} \in P_k \cap P_{k'}$ that define the vertex points of an $n-1$-dimensional simplex on $P_k \cap P_{k'}$. The vertices of a simplex are known to be affinely independent. Therefore, by Lemma \ref{lemma:affine}, the vectors
\begin{align}
    \begin{bmatrix} \bx_1 \\ 1 \end{bmatrix}, \ldots, \begin{bmatrix} \bx_{n} \\ 1 \end{bmatrix} \in \mathbb{R}^{n+1}
\end{align}
are linearly independent. Next, consider the matrix formed by these vectors
\begin{align}
    \mathbf{X} = \begin{bmatrix} \bx_1^\top & 1 \\ \vdots & \vdots \\ \bx_{n}^\top & 1 \end{bmatrix} \in \mathbb{R}^{n \times n+1}.
\end{align}
From (\ref{eq:x_depend}), and since $\ba_\eta^\top \bx_i - b_\eta = 0 \quad \forall i \in 1,\ldots,n$,
\begin{subequations}
\begin{align}
    \mathbf{X} \begin{bmatrix} \ba_{k',ij} \\ -b_{k',ij} \end{bmatrix}  &= \alpha' \mathbf{X}  \begin{bmatrix} \ba_\eta \\ -b_\eta \end{bmatrix}\\
    \mathbf{X} \begin{bmatrix} \ba_{k',ij} \\ -b_{k',ij} \end{bmatrix} &= \mathbf{0}.
\end{align}
\end{subequations}

By the rank-nullity theorem, nullity$(\mathbf{X}) = 1$ (rank is $n$ by construction and $\mathbf{X} \in \mathbb{R}^{n \times n+1}$). Furthermore, since both $[\ba_{k',ij}^\top \quad -b_{k',ij}]^\top$ and $[\ba_\eta^\top \quad -b_\eta]^\top$ lie in a single-dimensional nullspace, they must be scalar multiples of each other.
Thus, for some $\alpha' \in \{-1,0,1\}$,
\begin{subequations}
    \begin{align}
    [\ba_{k',ij}^\top \quad -b_{k',ij}]^\top, [\ba_\eta^\top \quad -b_\eta]^\top \in \text{null}(\mathbf{X}) \\
    \implies [\ba_{k',ij}^\top\quad  -b_{k',ij}] = \alpha'[\ba_\eta^\top\quad -b_\eta].
\end{align}
\end{subequations}

Lastly, $\begin{bmatrix} \multicolumn{2}{c}{\ba_{k',ij}^\top} & -b_{k',ij} \end{bmatrix} = \begin{bmatrix} \ba^\top_\eta & -b_\eta \end{bmatrix}$ cannot hold as it leaves $P_{k'}$ lower than the ambient dimension, so $\alpha' \neq 1$. Thus, we arrive at the desired result.
\end{proof}

\begin{lemma}[Affine Independence to Linear Independence]\label{lemma:affine}
Given affinely independent vectors $\bx_1,\ldots,\bx_k \in \mathbb{R}^n$, $[\bx_1^\top \quad 1 ]^\top, \ldots, [ \bx_k^\top \quad 1 ]^\top \in \mathbb{R}^{n+1}$ are linearly independent.
\end{lemma}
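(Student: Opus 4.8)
The plan is to prove the statement directly by unpacking the definition of affine independence and matching it against a hypothetical linear dependence relation among the augmented vectors. I would work from the summation form of the definition: vectors $\bx_1, \ldots, \bx_k \in \mathbb{R}^n$ are affinely independent precisely when the only scalars $c_1, \ldots, c_k$ satisfying both $\sum_{i=1}^k c_i \bx_i = \mathbf{0}$ and $\sum_{i=1}^k c_i = 0$ are $c_1 = \cdots = c_k = 0$.

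First I would suppose, toward establishing linear independence, that there exist scalars $c_1, \ldots, c_k$ with
\[
\sum_{i=1}^k c_i \begin{bmatrix} \bx_i \\ 1 \end{bmatrix} = \mathbf{0} \in \mathbb{R}^{n+1}.
\]
The key step is then to read this single vector equation componentwise. The first $n$ rows yield $\sum_{i=1}^k c_i \bx_i = \mathbf{0}$, while the final row yields $\sum_{i=1}^k c_i = 0$. These are exactly the two hypotheses appearing in the affine independence condition stated above.

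Applying affine independence of $\bx_1, \ldots, \bx_k$ to this pair of equations forces $c_1 = \cdots = c_k = 0$, which is precisely the assertion that the augmented vectors $[\bx_1^\top \ 1]^\top, \ldots, [\bx_k^\top \ 1]^\top$ are linearly independent, completing the argument.

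I do not anticipate a genuine obstacle here, as the result is essentially a restatement of the definition; the only point requiring care is to invoke the correct formulation of affine independence at the outset. If one instead began from the equivalent characterization that the differences $\bx_2 - \bx_1, \ldots, \bx_k - \bx_1$ are linearly independent, a short preliminary reduction would be needed to recast it into the ``$\sum_{i=1}^k c_i \bx_i = \mathbf{0}$ with $\sum_{i=1}^k c_i = 0$'' form before the componentwise splitting applies; selecting the summation form from the start sidesteps this detour entirely.
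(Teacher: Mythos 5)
Your proof is correct and takes essentially the same route as the paper's: both split the hypothesized dependence $\sum_{i=1}^k c_i [\bx_i^\top \ 1]^\top = \mathbf{0}$ componentwise into $\sum_{i=1}^k c_i \bx_i = \mathbf{0}$ and $\sum_{i=1}^k c_i = 0$, then invoke the summation-form definition of affine independence to force $c_1 = \cdots = c_k = 0$. Your closing remark about preferring the summation form over the difference characterization matches the paper's implicit choice as well.
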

\begin{proof}
The vectors $[\bx_1^\top \quad 1 ]^\top, \ldots, [ \bx_k^\top \quad 1 ]^\top$ are linearly independent if and only if
\begin{subequations}
\begin{align}
    \lambda_1\begin{bmatrix} \bx_1 \\ 1 \end{bmatrix} + \lambda_2\begin{bmatrix} \bx_2 \\ 1 \end{bmatrix} + \ldots + \lambda_k\begin{bmatrix} \bx_{k} \\ 1 \end{bmatrix} &= \mathbf{0} \\
    \implies \lambda_1 = \lambda_2 = \ldots = \lambda_k &= 0.
\end{align}
\end{subequations}
This condition is equivalent to
\begin{subequations}
\begin{align}
    \begin{rcases}
    \lambda_1 \bx_1 + \lambda_2 \bx_2 + \ldots + \lambda_k \bx_k &= \mathbf{0} \\
    \lambda_1 + \lambda_2 + \ldots + \lambda_k &= 0
  \end{rcases} \\
  \implies \lambda_1 = \lambda_2 = \ldots = \lambda_k &= 0,
\end{align}
\end{subequations}
i.e. $\bx_1,\ldots,\bx_k$ are affinely independent.
\end{proof}

\end{document}